\newtheorem{theorem}{Theorem}
\newtheorem{lemma}{Lemma}
\newenvironment{sproof}{%
  \proof}{\endproof}
\newcommand{\Tr}{\operatornamewithlimits{Tr}}
\newcommand{\diag}{\operatornamewithlimits{diag}}
\newcommand{\argmin}{\operatornamewithlimits{argmin}}
\newcommand*\circled[1]{\tikz[baseline=(char.base)]{
            \node[shape=circle,draw,inner sep=2pt] (char) {#1};}}
\title{Online Adaptive Principal Component Analysis and Its extensions}
\author{
  Jianjun Yuan\\
  Department of Electrical and Computer Engineering\\
  University of Minnesota\\
  Minneapolis, MN, 55455 \\
  \texttt{yuanx270@umn.edu} \\
  \And
  Andrew Lamperski \\
  Department of Electrical and Computer Engineering\\
  University of Minnesota\\
  Minneapolis, MN, 55455 \\
  \texttt{alampers@umn.edu} \\
}
\begin{document}

\maketitle

\begin{abstract} 
We propose algorithms for online principal component analysis (PCA)
and variance minimization for adaptive settings.
Previous literature has focused on upper bounding the static adversarial regret,
whose comparator is the optimal fixed action in hindsight.
However, static regret is not an appropriate metric when the underlying environment is changing.
Instead, we adopt the adaptive regret metric from the previous literature 
and propose online adaptive algorithms for PCA and variance minimization, 
that have sub-linear adaptive regret guarantees.
We demonstrate both theoretically and experimentally that
the proposed algorithms can adapt to the changing environments.

\end{abstract}

\section{Introduction}

In the general formulation of online learning,
at each time step,
the decision maker makes decision without knowing its outcome,
and suffers a loss based on the decision and the observed
outcome. Loss functions are chosen from a fixed class, but the
sequence of losses can be generated deterministically, stochastically, or adversarially.

Online learning is a very popular framework with many variants and applications, 
such as online convex optimization \cite{zinkevich2003online,shalev2012online},
online convex optimization for cumulative constraints \cite{yuan2018online},
online non-convex optimization \cite{hazan2017efficient,gao2018online}, 
online auctions \cite{blum2004online}, 
online controller design \cite{yuan2017online}, 
and online classification and regression \cite{crammer2006online}.
Additionally, recent advances in linear dynamical system identification \cite{hazan2018spectral}
and reinforcement learning \cite{fazel2018global} have been developed based on the ideas from online learning.

The standard performance metric for online learning measures  
the difference between the decision maker's cumulative loss 
and the cumulative loss of the best fixed decision in hindsight \cite{cesa2006prediction}.
We call this metric \emph{static regret}, since the comparator is the
best fixed optimum in hindsight. 
However, when the underlying environment is changing, 
due to the fixed comparator \cite{herbster1998tracking},
static regret is no longer appropriate.

Alternatively, to capture the changes of the underlying environment,
\cite{hazan2009efficient} introduced the metric called adaptive regret, 
which is defined as the maximum static regret over any contiguous time interval.

In this paper, we are mainly concerned with the problem of online
Principal Component Analysis (online PCA) for adaptive settings.
Previous online PCA algorithms are based on either online gradient descent 
or matrix exponentiated gradient algorithms
\cite{tsuda2005matrix,warmuth2006online,warmuth2008randomized,niew2016onlinepca}.
These works bound the static regret for online PCA algorithms, but do
not address adaptive regret. 
As argued above, static regret is not appropriate under changing environments.

This paper gives an efficient algorithm for online PCA and variance
minimization in changing environments. The proposed method mixes
the randomized algorithm from \cite{warmuth2008randomized} with a
fixed-share step \cite{herbster1998tracking}.
This is inspired by the work of \cite{cesa2012new,cesa2012mirror},
which shows that the Hedge algorithm \cite{freund1997decision} together
with a fixed-share step provides low regret under a variety of
measures, including adaptive regret.

Furthermore, we extend the idea of the additional fixed-share step 
to the online adaptive variance minimization 
in two different parameter spaces: the space of unit vectors and the
simplex.
In the Section~\ref{sec:exp} on experiments\footnote{code available at https://github.com/yuanx270/online-adaptive-PCA},
we also test our algorithm's effectiveness. In particular, we show
that our proposed algorithm can adapt to the changing environment
faster than the previous online PCA algorithm.

While it is  possible to apply the algorithm in
\cite{hazan2009efficient} to solve the online adaptive PCA and
variance minimization problems 
with the similar order of the adaptive regret as in this paper,
it requires running a pool of algorithms in parallel. Compared to our algorithm,
Running this pool algorithms requires complex implementation
that increases the running time per step by a factor of $\log T$.

\subsection{Notation}

Vectors are denoted by bold lower-case symbols. The $i$-th element of
a vector $\mathbf{q}$ is denoted by $q_i$.
The $i$-th element of a sequence of vectors at time step $t$, $\mathbf{x_t}$, 
is denoted by $x_{t,i}$.

For two probability vectors $\mathbf{q}, \mathbf{w} \in \mathbb{R}^n$, we use $d(\mathbf{q},\mathbf{w})$
to represent the relative entropy between them, which is defined as 
$\sum_{i=1}^n q_i\ln(\frac{q_i}{w_i})$. 
The $\ell_1$-norm and $\ell_2$-norm of the vector $\mathbf{q}$ 
are denoted as $\left\|\mathbf{q}\right\|_1$, $\left\|\mathbf{q}\right\|_2$, respectively.
$\mathbf{q_{1:T}}$ is the sequence of vectors $\mathbf{q_1},\dots,\mathbf{q_T}$, 
and $m(\mathbf{q_{1:T}})$ is defined to be equal to $\sum\limits_{t=1}^{T-1}D_{TV}(\mathbf{q_{t+1}},\mathbf{q_t})$,
where $D_{TV}(\mathbf{q_t},\mathbf{q_{t-1}})$ is defined as
$\sum\limits_{i:q_{t,i}\ge q_{t-1,i}} (q_{t,i}-q_{t-1,i})$. The
expected value operator is denoted by $\mathbb{E}$. 

When we refer to a matrix, we use capital letters such as $P$ and $Q$ with $\left\|Q\right\|_2$ representing the spectral norm.
For the identity matrix, we use $I$.
The quantum relative entropy between two 
density matrices\footnote{A density matrix is a symmetric positive
  semi-definite matrix with trace equal to 1. 
Thus, the eigenvalues of a density matrix form a probability vector.} 
$P$ and $Q$ is defined as $\Delta(P,Q) = \Tr(P\ln P)- \Tr(P\ln Q)$, 
where $\ln P$ is the matrix logarithm for symmetric positive definite matrix $P$
(and $\exp(P)$ is the matrix exponential).

\section{Problem Formulation}

The goal of the PCA (uncentered) algorithm is to find a rank $k$
projection matrix $P$ 
that minimizes the compression loss: $\sum\limits_{t=1}^T
\left\|\mathbf{x_t}-P\mathbf{x_t}\right\|_2^2$. In this case,
 $P \in \mathbb{R}^{n\times n}$ 
 must be a symmetric positive semi-definite matrix
with only $k$ non-zero eigenvalues which are all equal to 1.



In online PCA,
the data points come in a stream. 
At each time $t$, the algorithm first chooses a projection matrix
$P_t$ with rank $k$, then the data point $\mathbf{x_t}$ is revealed,
and a compression loss of
$\left\|\mathbf{x_t}-P_t\mathbf{x_t}\right\|_2^2$ is incurred.

The online PCA algorithm \cite{warmuth2008randomized}
aims to minimize the static regret $\mathcal{R}_s$ ,which is the difference 
between the total expected compression loss
and the loss of the best projection matrix $P^*$ chosen in hindsight:
\begin{equation}
\label{eq::static_regret_pca}
\mathcal{R}_s = \sum\limits_{t=1}^T \mathbb{E}[\Tr((I-P_t)\mathbf{x_t}\mathbf{x_t}^T)] - \sum\limits_{t=1}^T \Tr((I-P^*)\mathbf{x_t}\mathbf{x_t}^T).
\end{equation}
The algorithm from \cite{warmuth2008randomized} is randomized and the
expectation is taken over the distribution of $P_t$ matrices. The
matrix 
$P^*$ is the solution to the following optimization problem with
$\mathcal{S}$ being the set of rank-$k$ projection matrices:
\begin{equation}
\label{eq::best_fixed_sol_PCA}
\min_{P\in \mathcal{S}} \sum\limits_{t=1}^T \Tr((I-P)\mathbf{x_t}\mathbf{x_t}^T)
\end{equation}

Algorithms that minimize static regret will converge to $P^*$, which
is the best projection for the entire data set. However, in many 
scenarios the data generating process changes over time. 
In this case,
a solution that adapts to changes in the data set may be
desirable. To model environmental variation, 
several notions of dynamically varying regret have been
proposed \cite{herbster1998tracking,hazan2009efficient,cesa2012new}. 
In this paper, we study adaptive regret $\mathcal{R}_a$ from \cite{hazan2009efficient}, 
which results in the following online adaptive PCA problem:
\begin{equation}
\label{eq::adaptive_regret_pca}
\begin{array}{l}
\mathcal{R}_a = \max\limits_{[r,s]\subset [1,T]}\Big\{
\sum\limits_{t=r}^s \mathbb{E}[\Tr((I-P_t)\mathbf{x_t}\mathbf{x_t}^T)]\\
\quad\quad\quad\quad\quad\quad\quad\quad -
\min\limits_{U\in \mathcal{S}}\sum\limits_{t=r}^s \Tr((I-U)\mathbf{x_t}\mathbf{x_t}^T)  \Big\}
\end{array}
\end{equation}
In the next few sections,  
we will present an algorithm that achieves low adaptive regret.

\section{Learning the Adaptive Best Subset of Experts}

\begin{algorithm}[tb]
    \caption{Adaptive Best Subset of Experts}
    \label{alg::alg1}
\begin{algorithmic}[1]
    \STATE {\bfseries Input:} $1\le k < n$ and an initial probability vector $\mathbf{w_1} \in \mathcal{B}_{\text{n-k}}^\text{n}$.
    \FOR{$t=1$ {\bfseries to} $T$}
    \STATE Use Algorithm \ref{alg::alg2} with input $d=n-k$ to decompose $\mathbf{w_t}$ into $\sum_j p_j\mathbf{r_j}$, which is a
    convex combination of at most $n$ corners of $\mathbf{r_j}$.
    \STATE Randomly select a corner $\mathbf{r}=\mathbf{r_j}$ with associated probability $p_j$.
    \STATE Use the k components with zero entries in the drawn corner $\mathbf{r}$ as the selected subset of experts.
    \STATE Receive loss vector $\mathbf{\ell_t}$.
    \STATE Update $\mathbf{w_{t+1}}$ as:
    \begin{subequations}
    \label{eq::our_expert_update}
    \begin{align}
    \label{eq::v_t+1}
    &v_{t+1,i} = \frac{w_{t,i}\exp(-\eta\ell_{t,i})}{\sum_{j=1}^n \exp(-\eta\ell_{t,j})}\\
    \label{eq::fix_share_expert}
    &\hat{w}_{t+1,i} = \frac{\alpha}{n} + (1-\alpha)v_{t+1,i} \\
    \label{eq::w_t+1}
    &\mathbf{w_{t+1}} = \text{cap}_{\text{n-k}}(\mathbf{\hat{w}_{t+1}})
    \end{align}
    \end{subequations}
    where $\text{cap}_{\text{n-k}}()$ calls Algorithm \ref{alg::alg3}.
    \ENDFOR
\end{algorithmic}
\end{algorithm}

\begin{algorithm}[tb]
    \caption{Mixture Decomposition \cite{warmuth2008randomized}}
    \label{alg::alg2}
\begin{algorithmic}[1]
    \STATE {\bfseries Input:} $1\le d < n$ and $\mathbf{w}\in \mathcal{B}_\text{d}^\text{n}$.
    \REPEAT 
    \STATE Let $\mathbf{r}$ be a corner for a subset of $d$ non-zero components of $\mathbf{w}$ 
    that includes all components of $\mathbf{w}$ equal to $\frac{\left|\mathbf{w}\right|}{d}$.
    \STATE Let $s$ be the smallest of the $d$ chosen components of $\mathbf{r}$ and $l$ be the largest value
    of the remaining $n-d$ components.
    \STATE update $\mathbf{w}$ as $\mathbf{w}-\min(ds,\left|\mathbf{w}\right|-dl)\mathbf{r}$ and {\bfseries Output} $p$ and $\mathbf{r}$.
    \UNTIL{$\mathbf{w}=0$}
\end{algorithmic}
\end{algorithm}

\begin{algorithm}[tb]
    \caption{Capping Algorithm \cite{warmuth2008randomized}}
    \label{alg::alg3}
\begin{algorithmic}[1]
    \STATE {\bfseries Input:} probability vector $\mathbf{w}$ and set size $d$.
    \STATE Let $\mathbf{w}^{\downarrow}$ index the vector in decreasing order, that is, $\mathbf{w_1}^{\downarrow} = \max(\mathbf{w})$.
    \IF{$\max(\mathbf{w})\le 1/d$}
      \STATE {\bfseries return} $\mathbf{w}$.
    \ENDIF
    \STATE $i=1$.
    \REPEAT 
    \STATE (* Set first $i$ largest components to $1/d$ and normalize the rest to $(d-i)/d$ *)
    \STATE $\mathbf{\tilde{w}} = \mathbf{w}$, $\tilde{w}_j^{\downarrow} = 1/d$, for $j = 1,\dots,i$.
    \STATE $\tilde{w}_j^{\downarrow} = \frac{d-i}{d}\frac{\tilde{w}_j^{\downarrow}}{\sum_{l=j}^n \tilde{w}_l^{\downarrow}}$, for $j = i+1,\dots,n$.
    \STATE $i = i + 1$.
    \UNTIL{$\max(\mathbf{\tilde{w}})\le 1/d$}.
\end{algorithmic}
\end{algorithm}

In \cite{warmuth2008randomized}
it was shown that online PCA can be
viewed as an extension of a simpler problem known as the \emph{best subset of experts} problem. In
particular, they first propose an online algorithm to solve the best
subset of experts problem, and then they show how to modify the
algorithm to solve PCA problems. In this section, we show how the
addition of a fixed-share step \cite{herbster1998tracking,cesa2012new}
can lead to an algorithm for an adaptive variant of the best subset of
experts problem. Then we will show how to extend the resulting
algorithm to PCA problems. 

The \emph{adaptive best subset of experts}  problem can be described as follows:
we have $n$ experts making decisions at each time $t$.
Before revealing the loss vector $\mathbf{\ell_t}\in \mathbb{R}^n$ associated with the experts' decisions at time $t$,
we select a subset of experts of size $n-k$ (represented by vector $\mathbf{v_t}$) to try to minimize the adaptive regret defined as:
\begin{equation}
\label{eq::adaptive_regret_expert}
\mathcal{R}_a^{\text{subexp}} = \max_{[r,s]\subset [1,T]}\Big\{
\sum\limits_{t=r}^s \mathbb{E}[\mathbf{v_t}^T\mathbf{\ell_t}] -
\min_{\mathbf{u}\in \mathcal{S}_{\text{vec}}}\sum\limits_{t=r}^s \mathbf{u}^T\mathbf{\ell_t}  \Big\}.
\end{equation}
Here, the expectation is taken over the probability distribution of $\mathbf{v_t}$.
Both $\mathbf{v_t}$ and $\mathbf{u}$ are in $\mathcal{S}_{\text{vec}}$ which
denotes the vector set 
with only $n-k$ non-zero elements equal to 1.

Similar to the static regret case from  \cite{warmuth2008randomized},
the problem in Eq.(\ref{eq::adaptive_regret_expert}) is equivalent to:
\scriptsize
\begin{equation}
\label{eq::reform_adaptive_regret_expert}
\mathcal{R}_a^{\text{subexp}} = \max_{[r,s]\subset [1,T]}\Big\{
\sum\limits_{t=r}^s (n-k)\mathbf{w_t}^T\mathbf{\ell_t} -
\min_{\mathbf{q}\in\mathcal{B}_{\text{n-k}}^\text{n} }\sum\limits_{t=r}^s (n-k)\mathbf{q}^T\mathbf{\ell_t}  \Big\}
\end{equation}
\normalsize
where $\mathbf{w_t}\in\mathcal{B}_{\text{n-k}}^\text{n}$, and $\mathcal{B}_{\text{n-k}}^\text{n}$
represents the capped probability simplex
defined as $\sum_{i=1}^nw_{t,i} = 1$ and $0\le w_{t,i}\le 1/(n-k)$, $\forall i$.

Such equivalence is due to the Theorem 2 in \cite{warmuth2008randomized} ensuring that
any vector $\mathbf{q}\in\mathcal{B}_{\text{n-k}}^\text{n}$ can be decomposed as convex combination of 
at most $n$ corners of $\mathbf{r_j}$ by using Algorithm \ref{alg::alg2}, 
where the corner $\mathbf{r_j}$ is defined 
as having $n-k$ non-zero elements equal to $1/(n-k)$. 
As a result, the corner can be sampled by the associated probability obtained from the convex combination,
which is a valid subset selection vector $\mathbf{v_t}$ with the multiplication of $n-k$.

\textbf{Connection to the online adaptive PCA.} The problem from
Eq.(\ref{eq::adaptive_regret_expert}) can be viewed as restricted
version of 
the online adaptive PCA problem from
Eq.(\ref{eq::adaptive_regret_pca}).
In particular, say that $I-P_t = \mathrm{diag}(\mathbf{v_t})$. This corresponds
to restricting $P_t$ to be diagonal. If 
$\mathbf{\ell_t}$ is the diagonal of $\mathbf{x_t}\mathbf{x_t}^T$,
then the objectives of Eq.(\ref{eq::adaptive_regret_expert}) and  
Eq.(\ref{eq::adaptive_regret_pca}) are equal.

We now return to the adaptive best subset of experts problem.
When $r=1$ and $s=T$, the problem reduces to the standard static regret minimization problem,
which is studied in \cite{warmuth2008randomized}. 
Their solution applies  the basic Hedge Algorithm to obtain a probability distribution for the experts,
and modifies the distribution to select a subset of the experts.

To deal with the adaptive regret considered in Eq.(\ref{eq::reform_adaptive_regret_expert}),
we propose the Algorithm \ref{alg::alg1},
%
which is a simple modification to Algorithm 1 in \cite{warmuth2008randomized}.
More specifically, we add Eq.(\ref{eq::fix_share_expert}) when updating $\mathbf{w_{t+1}}$ in Step $7$,
%
which is called a \emph{fixed-share} step.
This is inspired by the analysis in \cite{cesa2012new}, 
which shows that the online adaptive best expert problem can be solved 
by simply adding this fixed-share step to the standard Hedge algorithm.

With the Algorithm \ref{alg::alg1},
the following lemma can be obtained:

\begin{lemma}
\label{lem::adaptive_expert_step_ineq}
For all $t\ge 1$, all $\mathbf{\ell_t} \in [0,1]^n$, and for all $\mathbf{q_t}\in \mathcal{B}_{\text{n-k}}^\text{n}$, 
Algorithm \ref{alg::alg1} satisfies 
\begin{equation*}
\mathbf{w_t}^T\mathbf{\ell_t}(1-\exp(-\eta)) - \eta \mathbf{q_t}^T\mathbf{\ell_t} \le
\sum_{i=1}^n q_{t,i}\ln(\frac{v_{t+1,i}}{\hat{w}_{t,i}})
\end{equation*}
\end{lemma}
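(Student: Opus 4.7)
The plan is a standard one-step Hedge analysis applied to the multiplicative update in~(\ref{eq::v_t+1}); the fixed-share step~(\ref{eq::fix_share_expert}) and the capping step~(\ref{eq::w_t+1}) do not enter this particular lemma. Reading $\hat{w}_{t,i}$ in the denominator on the right as $w_{t,i}$ (the algorithm only defines $\hat{w}$ with index $t+1$ onward, so this appears to be a notational slip) and reading the normalizer in~(\ref{eq::v_t+1}) as the standard $Z_t := \sum_j w_{t,j}\exp(-\eta \ell_{t,j})$, the first step is to plug in the definition of $v_{t+1,i}$:
\begin{equation*}
\sum_{i=1}^n q_{t,i}\ln\frac{v_{t+1,i}}{w_{t,i}} \;=\; -\eta\,\mathbf{q_t}^T\mathbf{\ell_t} \;-\; \ln Z_t,
\end{equation*}
where I use $\sum_i q_{t,i}=1$, which holds because $\mathbf{q_t}\in \mathcal{B}_{\text{n-k}}^\text{n}$ is a probability vector.

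The second step is to lower-bound $-\ln Z_t$. I would apply the pointwise convexity inequality $\exp(-\eta x)\le 1-(1-\exp(-\eta))x$ for $x\in[0,1]$ to $x=\ell_{t,j}\in[0,1]$, then take the $w_{t,j}$-weighted sum to obtain
\begin{equation*}
Z_t \;\le\; 1 - (1-\exp(-\eta))\,\mathbf{w_t}^T\mathbf{\ell_t}.
\end{equation*}
Applying $-\ln(1-y)\ge y$ for $y\in[0,1)$ yields $-\ln Z_t \ge (1-\exp(-\eta))\mathbf{w_t}^T\mathbf{\ell_t}$. Substituting into the identity of the first step gives exactly the claimed inequality.

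The real content is modest: the lemma is just the per-round inequality underlying a standard Hedge regret bound, and the structural restriction $\mathbf{q_t}\in\mathcal{B}_{\text{n-k}}^\text{n}$ plays no role here beyond $\mathbf{q_t}$ being a probability vector. The main obstacle is bookkeeping rather than mathematics, namely reconciling the notation of the lemma statement with the update rules (the $\hat{w}_{t,i}$ vs.\ $w_{t,i}$ point above, and reading the denominator in~(\ref{eq::v_t+1}) as the $w$-weighted normalizer so that $v_{t+1}$ is a probability vector); once these are fixed the two displayed inequalities combine immediately to give the stated bound.
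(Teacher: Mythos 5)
Your reading of $\hat{w}_{t,i}$ as a typo for $w_{t,i}$ is not right, and it changes what you are proving. In the paper, $\hat{w}_t$ is well-defined for $t\ge 2$ by the fixed-share step~(\ref{eq::fix_share_expert}) of the previous round, and $w_t$ is its relative-entropy projection onto $\mathcal{B}_{\text{n-k}}^\text{n}$ via the capping step~(\ref{eq::w_t+1}). The lemma deliberately has $\hat{w}_{t,i}$, not $w_{t,i}$, in the denominator: the downstream proof of Theorem~\ref{thm::adaptive_subset_expert} exploits the uniform lower bound $\hat{w}_{t,i}\ge\alpha/n$ (together with $v_{t,i}/\hat{w}_{t,i}\le 1/(1-\alpha)$) that only the fixed-share step supplies; with $w_{t,i}$ in that slot the entire argument that follows Lemma~\ref{lem::adaptive_expert_step_ineq} breaks.

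What your computation actually establishes is
\begin{equation*}
\sum_{i=1}^n q_{t,i}\ln\frac{v_{t+1,i}}{w_{t,i}}
\;\ge\;
(1-e^{-\eta})\,\mathbf{w_t}^T\mathbf{\ell_t}-\eta\,\mathbf{q_t}^T\mathbf{\ell_t},
\end{equation*}
which is correct, and your elementary derivation of $-\ln Z_t\ge(1-e^{-\eta})\mathbf{w_t}^T\mathbf{\ell_t}$ via $e^{-\eta x}\le 1-(1-e^{-\eta})x$ on $[0,1]$ is exactly the content the paper imports from Warmuth et al.'s Theorem~1 (your repair of the normalizer in~(\ref{eq::v_t+1}) to $\sum_j w_{t,j}e^{-\eta\ell_{t,j}}$ is also correct; that one \emph{is} a typo). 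But to reach the lemma you must still pass from $w_{t,i}$ to $\hat{w}_{t,i}$, i.e.\ show
\begin{equation*}
\sum_{i=1}^n q_{t,i}\ln\frac{w_{t,i}}{\hat{w}_{t,i}}\;\ge\;0,
\qquad\text{equivalently}\qquad
d(\mathbf{q_t},\mathbf{\hat{w}_t})\ge d(\mathbf{q_t},\mathbf{w_t}).
\end{equation*}
This is precisely where the capping step you claimed ``does not enter'' does enter: since $\mathbf{w_t}=\argmin_{\mathbf{w}\in\mathcal{B}_{\text{n-k}}^\text{n}}d(\mathbf{w},\mathbf{\hat{w}_t})$ and $\mathbf{q_t}\in\mathcal{B}_{\text{n-k}}^\text{n}$, the Generalized Pythagorean Theorem for Bregman divergences gives $d(\mathbf{q_t},\mathbf{\hat{w}_t})-d(\mathbf{q_t},\mathbf{w_t})\ge d(\mathbf{w_t},\mathbf{\hat{w}_t})\ge 0$. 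That projection inequality is the missing ingredient in your proof; add it, and your two steps combine into the paper's argument.
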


\begin{proof}

With the update in Eq.(\ref{eq::our_expert_update}),
for any $\mathbf{q_t} \in\mathcal{B}_{\text{n-k}}^\text{n}$, we have
\footnotesize
\begin{equation}
d(\mathbf{q_t},\mathbf{w_t})-d(\mathbf{q_t},\mathbf{v_{t+1}}) = -\eta \mathbf{q_t}^T\mathbf{\ell_t}-\ln(\sum_{j=1}^n w_{t,j}\exp(-\eta\ell_{t,j}))
\end{equation}
\normalsize

Also, from the proof of Theorem 1 in \cite{warmuth2008randomized}, we have 
$-\ln(\sum_{j=1}^n w_{t,j}\exp(-\eta\ell_{t,j})) \ge \mathbf{w_t}^T\mathbf{\ell_t}(1-\exp(-\eta))$. 
Thus, we will get 
\small
\begin{equation}
\label{eq::online_pca_thm1_ineq}
d(\mathbf{q_t},\mathbf{w_t})-d(\mathbf{q_t},\mathbf{v_{t+1}}) \ge -\eta \mathbf{q_t}^T\mathbf{\ell_t} + \mathbf{w_t}^T\mathbf{\ell_t}(1-\exp(-\eta))
\end{equation}
\normalsize

Moreover, Eq.(\ref{eq::w_t+1}) is the solution to the following projection problem as shown in \cite{warmuth2008randomized}:
\begin{equation}
\mathbf{w_t} = \argmin\limits_{\mathbf{w}\in\mathcal{B}_{\text{n-k}}^\text{n}} d(\mathbf{w},\mathbf{\hat{w}_t})
\end{equation}
Since the relative entropy is one kind of Bregman divergence \cite{bregman1967relaxation,censor1981iterative},
the Generalized Pythagorean Theorem holds \cite{herbster2001tracking}:
\begin{equation}
\label{eq::general_pythagorean}
d(\mathbf{q_t},\mathbf{\hat{w}_t}) - d(\mathbf{q_t},\mathbf{w_t}) \ge d(\mathbf{w_t},\mathbf{\hat{w}_t})\ge 0
\end{equation}
where the last inequality is due to the non-negativity of Bregman divergence.

Combining Eq.(\ref{eq::online_pca_thm1_ineq}) with Eq.(\ref{eq::general_pythagorean})
and expanding the left part of $d(\mathbf{q_t},\mathbf{\hat{w}_t})-d(\mathbf{q_t},\mathbf{v_{t+1}})$, we arrive at Lemma \ref{lem::adaptive_expert_step_ineq}.
\end{proof}


Now we are ready to state the following theorem 
to upper bound the adaptive regret $\mathcal{R}_a^{\text{subexp}}$:

\begin{theorem}
\label{thm::adaptive_subset_expert}
{\it If we run the Algorithm \ref{alg::alg1} to select a subset of $n-k$ experts,
then for any sequence of loss vectors $\mathbf{\ell_1}$, $\dots$, $\mathbf{\ell_T}$ $\in$ $[0,1]^n$ with $T\ge 1$,
$\min_{\mathbf{q}\in\mathcal{B}_{\text{n-k}}^\text{n} }\sum\limits_{t=r}^s (n-k)\mathbf{q}^T\mathbf{\ell_t} \le L$,
$\alpha = 1/(T(n-k)+1)$, 
$D = (n-k)\ln(n(1+(n-k)T))+1$,
and $\eta = \ln(1+\sqrt{2D/L})$, we have 
\begin{equation*}
\mathcal{R}_a^{\text{subexp}} \le O(\sqrt{2LD}+D) 
\end{equation*}
}
\end{theorem}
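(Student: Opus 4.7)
The plan is to sum Lemma~\ref{lem::adaptive_expert_step_ineq} across an arbitrary window $[r,s]\subset[1,T]$ against a fixed comparator, telescope the right-hand side by exploiting the fixed-share step, and then optimize over $\eta$ with the standard Hedge-style argument. Because the bound we derive will be uniform in $[r,s]$, taking a maximum will deliver the adaptive regret.

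First, I would fix a window $[r,s]$ and let $\mathbf{q}\in\mathcal{B}_{n-k}^n$ be the minimizer of $\sum_{t=r}^s\mathbf{q}^T\mathbf{\ell_t}$ on it. Applying Lemma~\ref{lem::adaptive_expert_step_ineq} at each $t\in[r,s]$ with $\mathbf{q_t}=\mathbf{q}$ and summing yields
\begin{equation*}
(1-e^{-\eta})\sum_{t=r}^s\mathbf{w_t}^T\mathbf{\ell_t}-\eta\sum_{t=r}^s\mathbf{q}^T\mathbf{\ell_t}\le\sum_{t=r}^s\sum_{i=1}^n q_i\ln\frac{v_{t+1,i}}{\hat w_{t,i}}.
\end{equation*}

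The key step is telescoping the right-hand side using the fixed-share update (\ref{eq::fix_share_expert}). That update gives $\hat w_{t+1,i}\ge (1-\alpha)v_{t+1,i}$, so $\ln v_{t+1,i}\le\ln\hat w_{t+1,i}-\ln(1-\alpha)$. Telescoping then gives
\begin{equation*}
\sum_{t=r}^s\sum_{i=1}^n q_i\ln\frac{v_{t+1,i}}{\hat w_{t,i}}\le\sum_{i=1}^n q_i\ln\frac{\hat w_{s+1,i}}{\hat w_{r,i}}-(s-r+1)\ln(1-\alpha).
\end{equation*}
Because the fixed-share step forces $\hat w_{t,i}\ge\alpha/n$ for every $t\ge 2$ (and with the uniform initialization $\hat w_{1,i}=1/n$), we have $-\ln\hat w_{r,i}\le\ln(n/\alpha)$; combined with $\ln\hat w_{s+1,i}\le 0$ and $\sum_i q_i=1$, this bounds the first term by $\ln(n/\alpha)$.

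Then I would multiply through by $n-k$, use $-\ln(1-\alpha)\le\alpha/(1-\alpha)$, and plug in $\alpha=1/(T(n-k)+1)$ so that $(n-k)(s-r+1)(-\ln(1-\alpha))\le 1$. Recognizing that $(n-k)\ln(n/\alpha)+1$ is exactly $D$, we arrive at
\begin{equation*}
(1-e^{-\eta})L_A-\eta L\le D,
\end{equation*}
where $L_A=\sum_{t=r}^s(n-k)\mathbf{w_t}^T\mathbf{\ell_t}$ is the algorithm's expected cumulative loss on $[r,s]$. Rearranging,
\begin{equation*}
L_A-L\le\frac{(\eta-1+e^{-\eta})L+D}{1-e^{-\eta}},
\end{equation*}
and substituting $\eta=\ln(1+\sqrt{2D/L})$ together with the standard inequalities $\eta-1+e^{-\eta}\le\eta^2/2$ and $1-e^{-\eta}\ge\eta/(1+\eta/2)$ (as used in the Hedge analysis of \cite{warmuth2008randomized}) yields $L_A-L\le\sqrt{2LD}+D$. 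Since the derivation holds for every $[r,s]$ with a constant independent of the window, the same bound holds for $\mathcal{R}_a^{\mathrm{subexp}}$.

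The main obstacle is the telescoping after the fixed-share step: one must use the loose direction $v_{t+1,i}\le\hat w_{t+1,i}/(1-\alpha)$ (throwing away the uniform-mixing mass $\alpha/n$), then carefully track the $\ln(n/\alpha)$ term produced by the fixed-share lower bound on $\hat w_{r,i}$, and finally verify that the particular choice $\alpha=1/(T(n-k)+1)$ collapses the window-length contribution $(n-k)(s-r+1)(-\ln(1-\alpha))$ into an $O(1)$ constant, which is what keeps the adaptive regret at the same $\tilde O(\sqrt{LD})$ order as the static regret in \cite{warmuth2008randomized}.
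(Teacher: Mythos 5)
Your proof is correct and takes a genuinely different route through the middle of the argument. The paper does not sum Lemma~\ref{lem::adaptive_expert_step_ineq} only over the window $[r,s]$; instead it introduces a comparator sequence $\mathbf{q_t}$ that is constant on $[r,s]$ and identically zero outside, sums over \emph{all} $t=1,\dots,T$, and then splits each summand as $A+B$ where $A$ is bounded via the total-variation distance $D_{TV}(\mathbf{q_t},\mathbf{q_{t-1}})$ and $B$ telescopes. Your approach—restricting the sum to $t\in[r,s]$, using $v_{t+1,i}\le \hat w_{t+1,i}/(1-\alpha)$ to telescope directly, and bounding $-\ln\hat w_{r,i}\le\ln(n/\alpha)$ from the fixed-share lower bound—is more elementary and a bit tighter (it produces an $(s-r+1)\ln\frac{1}{1-\alpha}$ term rather than the paper's $T\ln\frac{1}{1-\alpha}$, though both collapse to an $O(1)$ contribution under $\alpha=1/((n-k)T+1)$). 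The paper's block-comparator-plus-TV machinery is overkill for a single interval but extends to tracking-regret bounds against arbitrary slowly shifting comparator sequences, whereas your argument is specialized to fixed-on-a-window comparators. On the final optimization over $\eta$: the inequality $1-e^{-\eta}\ge\eta/(1+\eta/2)$ that you invoke is actually false (e.g.\ at $\eta=1$, $1-e^{-1}\approx 0.632 < 2/3$). The conclusion is nevertheless correct, but the clean way to close the argument is to apply Lemma~4 of Freund--Schapire directly (the paper's Lemma~\ref{lem::freund_ineq}): with $\beta=e^{-\eta}$ it says $\frac{\eta L + D}{1-e^{-\eta}} \le L + \sqrt{2LD}+D$ whenever $\eta=\ln(1+\sqrt{2D/L})$, which is exactly the rearrangement you wrote down in the preceding display.
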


\begin{sproof}
After showing the inequality from Lemma
\ref{lem::adaptive_expert_step_ineq}, the main work that remains is to
sum the right side from $t=1$ to $T$ and provide an upper bound.
This is achieved by following the proof of the Proposition 2 in \cite{cesa2012new}.
The main idea is to expand the term $\sum_{i=1}^n q_{t,i}\ln(\frac{v_{t+1,i}}{\hat{w}_{t,i}})$ as follows:
\scriptsize
\begin{equation}
\label{eq::analysis_entropy_expert}
\begin{array}{ll}
\sum_{i=1}^n q_{t,i}\ln(\frac{v_{t+1,i}}{\hat{w}_{t,i}})
=& \underbrace{\sum_{i=1}^n\Big(q_{t,i}\ln\frac{1}{\hat{w}_{t,i}} - q_{t-1,i}\ln\frac{1}{v_{t,i}}\Big)}_A\\
& + \underbrace{\sum_{i=1}^n\Big(q_{t-1,i}\ln\frac{1}{v_{t,i}} - q_{t,i}\ln\frac{1}{v_{t+1,i}}\Big)}_B
\end{array}
\end{equation}
\normalsize

Then we can upper bound the expression of $A$ with the \emph{fixed-share} step,
since $\hat{w}_{t,i}$ is lower bounded by $\frac{\alpha}{n}$.
We can telescope the expression of $B$.
Then our desired upper bound can be obtained with the help of Lemma 4 from \cite{freund1997decision}.
\end{sproof}

For space purposes, all the detailed proofs for the omitted/sketched
proofs are in the appendix.

\section{Online Adaptive PCA}

Recall that the online adaptive PCA problem is below:
\begin{equation}
\label{eq::adaptive_regret_pca_alg_sec}
\begin{array}{l}
\mathcal{R}_a = \max\limits_{[r,s]\subset [1,T]}\Big\{
\sum\limits_{t=r}^s \mathbb{E}[\Tr((I-P_t)\mathbf{x_t}\mathbf{x_t}^T)] \\
\quad\quad\quad\quad\quad\quad\quad\quad-
\min\limits_{U\in \mathcal{S}}\sum\limits_{t=r}^s \Tr((I-U)\mathbf{x_t}\mathbf{x_t}^T)  \Big\}
\end{array}
\end{equation}
where $\mathcal{S}$ is the rank $k$ projection matrix set.


Again, inspired by \cite{warmuth2008randomized}, 
we first reformulate the above problem into the following 'capped probability simplex' form:
\begin{equation}
\label{eq::ref_adaptive_pca}
\begin{array}{ll}
\mathcal{R}_a =&
 \max\limits_{[r,s]\subset [1,T]}\Big\{ \sum\limits_{t=r}^s (n-k)\Tr(W_t\mathbf{x_t}\mathbf{x_t}^T) \\
 &\quad\quad\quad\quad - \min\limits_{Q \in \mathscr{B}_{\text{n-k}}^\text{n}}\sum\limits_{t=r}^s (n-k)\Tr(Q\mathbf{x_t}\mathbf{x_t}^T) \Big\}
\end{array}
\end{equation}
where $W_t \in \mathscr{B}_{\text{n-k}}^\text{n}$, 
and $\mathscr{B}_{\text{n-k}}^\text{n}$ is the set of all density
matrices with eigenvalues bounded by $1/(n-k)$. Note that
$\mathscr{B}_{\text{n-k}}^\text{n}$ can be expressed as the convex set
$\{W: W\succeq 0, \left\|W\right\|_2 \le 1/(n-k), \Tr(W) = 1\}$. 

\begin{algorithm}[tb]
    \caption{Uncentered online adaptive PCA}
    \label{alg::adaptive_pca}
\begin{algorithmic}[1]
    \STATE {\bfseries Input:} $1\le k < n$ and an initial density matrix $W_1 \in \mathscr{B}_{n-k}^n$.
    \FOR{$t=1$ {\bfseries to} $T$}
    \STATE Apply eigendecomposition to $W_t$ as $W_t = \bar{D}\diag(\mathbf{w_t})\bar{D}^T$.
    \STATE Apply Algorithm \ref{alg::alg2} with $d=n-k$ to the vector $\mathbf{w_t}$ to decompose it into 
    a convex combination $\sum_j p_j\mathbf{r_j}$ of at most $n$ corners $\mathbf{r_j}$.
    \STATE Randomly select a corner $\mathbf{r}=\mathbf{r_j}$ with the associated probability $p_j$.
    \STATE Form a density matrix $R = (n-k)\bar{D}\diag(\mathbf{r})\bar{D}^T$
    \STATE Form a rank $k$ projection matrix $P_t = I - R$
    \STATE Obtain the data point $\mathbf{x_t}$, which incurs the compression loss $\left\|\mathbf{x_t}-P_t\mathbf{x_t}\right\|_2^2$ 
    and expected compression loss $(n-k)\Tr(W_t\mathbf{x_t}\mathbf{x_t}^T)$.
    \STATE Update $W_{t+1}$ as:
    \scriptsize
    \begin{subequations}
    \label{eq::our_pca_update}
    \begin{align}
    \label{eq::V_t+1}
    &V_{t+1} = \frac{\exp(\ln W_t - \eta \mathbf{x_t}\mathbf{x_t}^T)}{\Tr(\exp(\ln W_t - \eta \mathbf{x_t}\mathbf{x_t}^T))}\\
    \label{eq::fix_share_pca}
    &\hat{w}_{t+1,i} = \frac{\alpha}{n} + (1-\alpha)v_{t+1,i}, 
    \widehat{W}_{t+1} = U\diag(\mathbf{\hat{w}_{t+1}})U^T \\
    \label{eq::W_t+1}
    &W_{t+1} = \text{cap}_{n-k}(\widehat{W}_{t+1})
    \end{align}
    \end{subequations}
    \normalsize
    where we apply eigendecomposition to $V_{t+1}$ as $V_{t+1} = U\diag(\mathbf{v_{t+1}})U^T$,
    and $\text{cap}_{n-k}(W)$ invokes Algorithm \ref{alg::alg3} with input being the eigenvalues of $W$.
    \ENDFOR
\end{algorithmic}
\end{algorithm}

The static regret online PCA is a special case of the above problem with $r = 1$ and $s = T$,
and is solved by Algorithm 5 in \cite{warmuth2008randomized}.

Follow the idea in the last section, 
we propose the Algorithm \ref{alg::adaptive_pca}.
Compared with the Algorithm 5 in \cite{warmuth2008randomized},
we have added the fixed-share step in the update of $W_{t+1}$ at step $9$,
which will be shown to be the key in upper bounding the adaptive regret of the online PCA.

In order to analyze Algorithm \ref{alg::adaptive_pca}, we need a few
supporting results. The first result comes from \cite{warmuth2006online}:
\begin{theorem}\cite{warmuth2006online}
\label{thm::quantum_ineq}
{\it
For any sequence of data points $\mathbf{x_1}$, $\dots$, $\mathbf{x_T}$ 
with $\mathbf{x_t}\mathbf{x_t}^T \preceq I$ and for any learning rate $\eta$,
the following bound holds for any matrix $Q_t \in \mathscr{B}_{\text{n-k}}^\text{n}$ 
with the update in Eq.(\ref{eq::V_t+1}):
\small
\begin{equation*}
\Tr(W_t\mathbf{x_t}\mathbf{x_t}^T) \le \frac{\Delta(Q_t,W_t) - \Delta(Q_t,V_{t+1}) 
+ \eta\Tr(Q_t\mathbf{x_t}\mathbf{x_t}^T)}{1-\exp(-\eta)}
\end{equation*}
    
}
\end{theorem}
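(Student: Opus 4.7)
The plan is to mirror the proof of Lemma~\ref{lem::adaptive_expert_step_ineq}, but to lift every scalar manipulation to its matrix analog so that the non-commutativity between $\ln W_t$ and $\mathbf{x_t}\mathbf{x_t}^T$ is handled correctly. The vector proof combined an exact algebraic identity for one Hedge step with the pointwise bound $\exp(-\eta \ell_{t,i}) \le 1-(1-\exp(-\eta))\ell_{t,i}$ on $[0,1]$. The matrix proof will replace these two ingredients with the Golden--Thompson inequality, $\Tr(\exp(A+B))\le \Tr(\exp(A)\exp(B))$ for symmetric $A,B$, and an operator version of the same scalar exponential bound.

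First I would take matrix logarithms of Eq.~(\ref{eq::V_t+1}) to obtain $\ln V_{t+1} = \ln W_t - \eta \mathbf{x_t}\mathbf{x_t}^T - c_t I$, where $c_t = \ln\Tr(\exp(\ln W_t - \eta \mathbf{x_t}\mathbf{x_t}^T))$. Using $\Tr(Q_t)=1$, the definition of quantum relative entropy then gives the exact identity
\[
\Delta(Q_t,W_t)-\Delta(Q_t,V_{t+1}) = \Tr\bigl(Q_t(\ln V_{t+1}-\ln W_t)\bigr) = -\eta\Tr(Q_t\mathbf{x_t}\mathbf{x_t}^T) - c_t.
\]
Next I would upper-bound $c_t$. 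Applying Golden--Thompson with $A=\ln W_t$ and $B=-\eta\mathbf{x_t}\mathbf{x_t}^T$ yields $\exp(c_t)\le \Tr\bigl(W_t\exp(-\eta\mathbf{x_t}\mathbf{x_t}^T)\bigr)$. Since $0\preceq \mathbf{x_t}\mathbf{x_t}^T\preceq I$, applying the scalar bound eigenvalue-wise gives the operator inequality $\exp(-\eta\mathbf{x_t}\mathbf{x_t}^T)\preceq I-(1-\exp(-\eta))\mathbf{x_t}\mathbf{x_t}^T$. Multiplying by $W_t\succeq 0$ and taking traces preserves the ordering, so $\exp(c_t)\le 1-(1-\exp(-\eta))\Tr(W_t\mathbf{x_t}\mathbf{x_t}^T)$. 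The estimate $\ln(1-z)\le -z$ then yields $c_t\le -(1-\exp(-\eta))\Tr(W_t\mathbf{x_t}\mathbf{x_t}^T)$.

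Substituting this bound on $c_t$ back into the displayed identity and dividing by $1-\exp(-\eta)>0$ produces exactly the claimed inequality. The main obstacle is justifying that Golden--Thompson and the exponential operator bound compose correctly; they do precisely because $\mathbf{x_t}\mathbf{x_t}^T\preceq I$ places the spectrum of $\eta\mathbf{x_t}\mathbf{x_t}^T$ in the range where the scalar bound is valid, and because $W_t\succeq 0$ preserves the resulting semidefinite ordering under $\Tr(W_t\,\cdot\,)$. Beyond this structural point, everything else is routine algebraic rearrangement, strictly analogous to the scalar Hedge derivation.
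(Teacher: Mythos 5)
Your proof is correct and follows the standard derivation: the entropy-telescoping identity $\Delta(Q_t,W_t)-\Delta(Q_t,V_{t+1})=\Tr(Q_t(\ln V_{t+1}-\ln W_t))$, Golden--Thompson, the operator Bernoulli bound $\exp(-\eta A)\preceq I-(1-e^{-\eta})A$ for $0\preceq A\preceq I$ (Lemma~2.1 of \cite{tsuda2005matrix}), and $\ln(1-z)\le -z$, which is exactly the route of \cite{warmuth2006online} and also the chain this paper deploys verbatim in the appendix proof of Lemma~\ref{lem::mat_lem1_bianchi}. The paper itself only cites the theorem rather than reproving it; the one cosmetic caveat is that the final division requires $\eta>0$, though that is implicit in the theorem being nonvacuous.
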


Based on the above theorem's result, we have the following lemma:

\begin{lemma}
\label{lem::adaptive_pca_step_ineq}
For all $t\ge 1$, all $\mathbf{x_t}$ with $\left\|\mathbf{x_t}\right\|_2 \le 1$,
and for all $Q_t\in\mathscr{B}_{n-k}^n$, Algorithm \ref{alg::adaptive_pca} satisfies:
\begin{equation}
\begin{array}{l}
\Tr(W_t\mathbf{x_t}\mathbf{x_t}^T)(1-\exp(-\eta)) -\eta\Tr(Q_t\mathbf{x_t}\mathbf{x_t}^T) \\
\quad \quad \quad \quad \le -\Tr(Q_t\ln \widehat{W}_t) + \Tr(Q_t\ln V_{t+1})
\end{array}
\end{equation}
\end{lemma}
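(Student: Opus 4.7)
The plan is to imitate the proof of Lemma \ref{lem::adaptive_expert_step_ineq} in the quantum (matrix) setting, with Theorem \ref{thm::quantum_ineq} playing the role of the entropy one-step inequality and a matrix generalized Pythagorean inequality playing the role of Eq.(\ref{eq::general_pythagorean}).

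First, I would rearrange Theorem \ref{thm::quantum_ineq} applied to $Q_t \in \mathscr{B}_{n-k}^n$ to obtain
\begin{equation*}
\Tr(W_t \mathbf{x_t}\mathbf{x_t}^T)(1-\exp(-\eta)) - \eta \Tr(Q_t \mathbf{x_t}\mathbf{x_t}^T) \le \Delta(Q_t, W_t) - \Delta(Q_t, V_{t+1}).
\end{equation*}
Next, I would show that the capping step Eq.(\ref{eq::W_t+1}) is a Bregman projection of $\widehat{W}_{t+1}$ onto the convex set $\mathscr{B}_{n-k}^n$ with respect to quantum relative entropy, i.e.,
\begin{equation*}
W_{t+1} = \argmin_{W \in \mathscr{B}_{n-k}^n} \Delta(W, \widehat{W}_{t+1}).
\end{equation*}
Because $\widehat{W}_{t+1}$ and its projection commute (they share the eigenbasis $U$ produced in Eq.(\ref{eq::fix_share_pca})), the problem reduces to the vector capping problem already solved in \cite{warmuth2008randomized}, so this identification is immediate from their Algorithm 3. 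Since quantum relative entropy is a Bregman divergence on the set of density matrices, the generalized Pythagorean inequality then yields
\begin{equation*}
\Delta(Q_t, \widehat{W}_t) - \Delta(Q_t, W_t) \ge \Delta(W_t, \widehat{W}_t) \ge 0,
\end{equation*}
for every $Q_t \in \mathscr{B}_{n-k}^n$.

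Chaining the two displays above gives
\begin{equation*}
\Tr(W_t \mathbf{x_t}\mathbf{x_t}^T)(1-\exp(-\eta)) - \eta \Tr(Q_t \mathbf{x_t}\mathbf{x_t}^T) \le \Delta(Q_t, \widehat{W}_t) - \Delta(Q_t, V_{t+1}).
\end{equation*}
Finally, I would expand the right-hand side using the definition $\Delta(P,Q) = \Tr(P \ln P) - \Tr(P \ln Q)$: the two $\Tr(Q_t \ln Q_t)$ terms cancel, leaving exactly $-\Tr(Q_t \ln \widehat{W}_t) + \Tr(Q_t \ln V_{t+1})$, which is the claimed bound.

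The main obstacle is the Pythagorean step. In the vector case the result is standard, but for matrices one needs the fact that capping (as implemented via eigenvalue capping in Algorithm \ref{alg::alg3}) really coincides with the quantum-relative-entropy projection. The justification hinges on the observation that, because $\widehat{W}_{t+1}$ is diagonalized in the basis $U$ and the constraints in $\mathscr{B}_{n-k}^n$ (PSD, trace one, spectral norm $\le 1/(n-k)$) are all unitarily invariant, the optimal $W_{t+1}$ also diagonalizes in $U$; the minimization then decouples into a vector problem over the capped simplex, which is exactly what Algorithm \ref{alg::alg3} solves. Once this reduction is in hand, the rest of the argument is the straightforward matrix analog of the scalar proof.
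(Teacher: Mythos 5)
Your proposal is correct and follows essentially the same route as the paper's proof: rearrange Theorem~\ref{thm::quantum_ineq}, identify the capping step as a Bregman projection onto $\mathscr{B}_{n-k}^n$, invoke the Generalized Pythagorean inequality, and expand the quantum relative entropies so the $\Tr(Q_t\ln Q_t)$ terms cancel. The only difference is that you spell out the justification (unitary invariance of the constraint set implies the projection shares the eigenbasis and reduces to the vector capping problem) where the paper simply cites \cite{warmuth2008randomized}.
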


\begin{proof}

First, we need to 
reformulate the above inequality in Theorem \ref{thm::quantum_ineq}, we have:
\begin{equation}
\label{eq::V_update_ineq}
\begin{array}{l}
\Delta(Q_t,W_t) - \Delta(Q_t,V_{t+1}) \\
\quad\quad\ge -\eta\Tr(Q_t\mathbf{x_t}\mathbf{x_t}^T) + \Tr(W_t\mathbf{x_t}\mathbf{x_t}^T)(1-\exp(-\eta))
\end{array}
\end{equation}
which is very similar to the Eq.(\ref{eq::online_pca_thm1_ineq}).

As is shown in \cite{warmuth2008randomized}, the Eq.(\ref{eq::W_t+1}) is the solution to
the following optimization problem:
\begin{equation}
W_t = \argmin\limits_{W\in\mathscr{B}_{\text{n-k}}^\text{n}}\Delta(W,\widehat{W}_t)
\end{equation}

As a result, the Generalized Pythagorean Theorem holds \cite{herbster2001tracking} for any $Q_t\in\mathscr{B}_{\text{n-k}}^\text{n}$:
\begin{equation}
\Delta(Q_t,\widehat{W}_t) - \Delta(Q_t,W_t) \ge \Delta(W_t,\widehat{W}_t) \ge 0
\end{equation}

Combining the above inequality with Eq.(\ref{eq::V_update_ineq}) and expanding the left part, we have
\begin{equation}
\begin{array}{l}
\Tr(W_t\mathbf{x_t}\mathbf{x_t}^T)(1-\exp(-\eta)) -\eta\Tr(Q_t\mathbf{x_t}\mathbf{x_t}^T) \\
\quad \quad \quad \quad \le -\Tr(Q_t\ln \widehat{W}_t) + \Tr(Q_t\ln V_{t+1})
\end{array}
\end{equation}
which proves the result.
\end{proof}

In the next theorem, we show that 
with the addition of the fixed-share step in Eq.(\ref{eq::fix_share_pca}),
we can solve the online adaptive PCA problem in Eq.(\ref{eq::adaptive_regret_pca_alg_sec}).

\begin{theorem}
\label{thm::adaptive_pca}
{\it
For any sequence of data points $\mathbf{x_1}$, $\dots$, $\mathbf{x_T}$
with $\left\|\mathbf{x_t}\right\|_2 \le 1$, and for 
$\min_{Q\in\mathscr{B}_{\text{n-k}}^\text{n}}\sum\limits_{t=r}^s (n-k)\Tr(Q\mathbf{x_t}\mathbf{x_t}^T) \le L$,
if we run Algorithm \ref{alg::adaptive_pca} with
$\alpha = 1/(T(n-k)+1)$, 
$D = (n-k)\ln(n(1+(n-k)T))+1$,
and $\eta = \ln(1+\sqrt{2D/L})$, for any $T\ge 1$ we have:
\begin{equation*}
\mathcal{R}_a \le O(\sqrt{2LD}+D) 
\end{equation*}    
}
\end{theorem}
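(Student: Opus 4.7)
The plan is to mimic the Cesa-Bianchi/Hedge-with-share analysis used in Theorem \ref{thm::adaptive_subset_expert}, but carried out on density matrices. Fix any interval $[r,s]\subseteq[1,T]$ and any comparator $U\in\mathcal{S}$, and let $Q\in\mathscr{B}_{n-k}^n$ denote the capped-density-matrix reformulation of $(I-U)/(n-k)$. I would introduce a piecewise-constant sequence of instantaneous comparators $Q_t := Q$ for $t\in[r,s]$ and $Q_t := \tfrac{1}{n}I$ otherwise (with the convention $Q_0 := \tfrac{1}{n}I$), invoke Lemma \ref{lem::adaptive_pca_step_ineq} at each step against the corresponding $Q_t$, and sum from $t=1$ to $T$. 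The contributions from $t\notin[r,s]$ are either non-positive (the $-\eta\Tr(Q_t\mathbf{x_t}\mathbf{x_t}^T)$ piece) or at most $1$ per step (the $\Tr(W_t\mathbf{x_t}\mathbf{x_t}^T)\le 1$ piece), both absorbable into the final additive constant after rearranging.

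The technical core is to upper bound $\sum_{t=1}^T[\Tr(Q_t\ln V_{t+1}) - \Tr(Q_t\ln\widehat{W}_t)]$. The crucial structural observation is that Step 9 of Algorithm \ref{alg::adaptive_pca} applies the fixed share to the eigenvalues of $V_{t+1}$ in its own eigenbasis, so as operators $\widehat{W}_{t+1} = \tfrac{\alpha}{n}I + (1-\alpha)V_{t+1}$; in particular $\widehat{W}_{t+1}$ and $V_{t+1}$ commute and their logarithms are simultaneously diagonalizable in a common basis, while the spectral inequalities $\widehat{W}_t \succeq \tfrac{\alpha}{n}I$ and $\widehat{W}_t \succeq (1-\alpha)V_t$ both hold. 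I would then split the summand via
\begin{equation*}
\Tr(Q_t\ln V_{t+1}) - \Tr(Q_t\ln\widehat{W}_t) = A_t' + B_t',
\end{equation*}
with $A_t' := \Tr(Q_{t-1}\ln V_t) - \Tr(Q_t\ln\widehat{W}_t)$ and $B_t' := \Tr(Q_t\ln V_{t+1}) - \Tr(Q_{t-1}\ln V_t)$, in exact analogy with Eq.(\ref{eq::analysis_entropy_expert}). The $B_t'$ sum telescopes to $\Tr(Q_T\ln V_{T+1}) - \Tr(Q_0\ln V_1)$, which is controlled by $O(\ln(n/\alpha))$ once the spectral lower bound on $\widehat{W}_t$ is propagated through the matrix Hedge update to give an analogous lower bound on the eigenvalues of $V_{t+1}$.

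For the $A_t'$ sum, let $U_t$ be the common eigenbasis of $V_t$ and $\widehat{W}_t$, and set $\tilde{q}_{t,i}^{(t)} := (U_t^T Q_t U_t)_{ii}$ and $\tilde{q}_{t-1,i}^{(t)} := (U_t^T Q_{t-1} U_t)_{ii}$; both diagonal vectors lie in the capped simplex $\mathcal{B}_{n-k}^n$ because $Q_t, Q_{t-1}\succeq 0$ with spectral norm at most $1/(n-k)$. In this basis the scalar argument sketched for Theorem \ref{thm::adaptive_subset_expert} applies verbatim: combining $\hat{w}_{t,i}\ge(1-\alpha)v_{t,i}$ on the shared mass $\min(\tilde{q}_{t,i}^{(t)},\tilde{q}_{t-1,i}^{(t)})$ with $\hat{w}_{t,i}\ge \alpha/n$ on the excess $\max(0,\tilde{q}_{t,i}^{(t)}-\tilde{q}_{t-1,i}^{(t)})$ yields $A_t'\le \ln(1/(1-\alpha)) + \ln(n/\alpha)\sum_i\max(0,\tilde{q}_{t,i}^{(t)}-\tilde{q}_{t-1,i}^{(t)})$. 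For the piecewise-constant comparator sequence, this basis-dependent total-variation term vanishes on $[r+1,s]$ because $Q_t = Q_{t-1}$ there, so only the two boundary transitions at $t=r$ and $t=s+1$ ever contribute, each at most $1$. Assembling, multiplying through by $n-k$, taking $\alpha = 1/(T(n-k)+1)$ so that $T\ln(1/(1-\alpha)) = O(1/(n-k))$, and invoking Lemma 4 of \cite{freund1997decision} with $\eta = \ln(1+\sqrt{2D/L})$ yields $\mathcal{R}_a \le O(\sqrt{2LD}+D)$. I expect the main obstacle to be precisely the basis mismatch between $Q_{t-1}$ and $Q_t$ inside the $A_t'$ bound; the clean resolution is that for the piecewise-constant comparator the two diagonals $\tilde{q}_{t,i}^{(t)}$ and $\tilde{q}_{t-1,i}^{(t)}$ are literally equal away from the two boundary times, so the choice of basis is inconsequential where it would otherwise matter.
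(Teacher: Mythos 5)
Your overall strategy matches the paper: invoke Lemma~\ref{lem::adaptive_pca_step_ineq} at each step against a piecewise-constant comparator sequence, split $\Tr(Q_t\ln V_{t+1})-\Tr(Q_t\ln\widehat{W}_t)$ into the ``share'' part $A_t'$ and the telescoping part $B_t'$, bound $A_t'$ using the spectral lower bounds $\widehat{W}_t\succeq\tfrac{\alpha}{n}I$ and $\widehat{W}_t\succeq(1-\alpha)V_t$, telescope $B_t'$, and close with Lemma~4 of \cite{freund1997decision}. Your handling of the basis mismatch is a legitimate alternative to the paper's: the paper expands in the eigenbasis of $Q_t$ and then controls each rank-one trace via the operator inequalities and Lemma~\ref{lem::matrix_pos_sym_ineq}, whereas you pull out the diagonal $\tilde{q}^{(t)}_{\cdot,i}=(U_t^TQ_\cdot U_t)_{ii}$ in the common eigenbasis $U_t$ of $V_t$ and $\widehat{W}_t$ and run the scalar argument there. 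Since $\Tr(Q\ln V_t)=\sum_i(U_t^TQU_t)_{ii}\ln v_{t,i}$ and likewise for $\widehat{W}_t$, this is correct; and you correctly note that for $t\in(r,s]$ the two diagonals agree because $Q_t=Q_{t-1}$, so the basis ambiguity is innocuous precisely where it would otherwise bite.

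There is, however, a genuine gap in your choice of the ``off'' comparator $Q_t:=\tfrac{1}{n}I$ for $t\notin[r,s]$. The paper instead takes $Q_t=0$ there and inserts the prefactor $\|\mathbf{q_t}\|_1$ in front of $\Tr(W_t\mathbf{x_t x_t}^T)$ so that the per-step inequality becomes the trivial $0\le 0$ at those times. With your choice, the per-step left-hand side at $t\notin[r,s]$ equals $(1-e^{-\eta})\Tr(W_t\mathbf{x_t x_t}^T)-\tfrac{\eta}{n}\Tr(\mathbf{x_t x_t}^T)$, which can be negative by as much as $\tfrac{\eta}{n}$; so when you restrict the summed inequality to $[r,s]$ you pick up an extra additive term of order $\eta T/n$, and after multiplying through by $n-k$ an extra $\eta T(n-k)/n$. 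This is not ``absorbable into the final additive constant'': with $\eta=\ln(1+\sqrt{2D/L})$, if $L$ is small relative to $T$ (a perfectly admissible regime in the theorem), this term scales linearly in $T$ while the claimed bound $O(\sqrt{2LD}+D)$ is $T$-independent, so the argument as written does not establish the theorem. The fix is exactly the paper's: take $Q_t=0$ off the interval (your diagonal-extraction argument then still works, with the two boundary transitions each contributing at most $\ln\tfrac{n}{\alpha}$), and state the per-step inequality with the $\|\mathbf{q_t}\|_1$ factor so it is vacuous at those times; there is no $\ln 0$ issue because every trace against $Q_t=0$ vanishes identically.
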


\begin{sproof}
The proof idea is the same as in the proof of Theorem \ref{thm::adaptive_subset_expert}.
After getting the inequality relationship in Lemma \ref{lem::adaptive_pca_step_ineq}
which has a similar form as in Lemma \ref{lem::adaptive_expert_step_ineq},
we need to upper bound sum over $t$ of the right side. 
To achieve this, we first reformulate it as two parts below:
\begin{equation}
\label{eq::quantum_split_two_parts}
\begin{array}{l}
-\Tr(Q_t\ln \widehat{W}_t) + \Tr(Q_t\ln V_{t+1}) = \bar{A} + \bar{B}
\end{array}
\end{equation}
where $\bar{A} = -\Tr(Q_t\ln \widehat{W}_t) + \Tr(Q_{t-1}\ln V_t)$, 
and $\bar{B} = - \Tr(Q_{t-1}\ln V_t) + \Tr(Q_t\ln V_{t+1})$.

The first part can be upper bounded with the help of the fixed-share step
in lower bounding the singular value of $\hat{w}_{t,i}$.
After telescoping the second part, we can get the desired upper bound 
with the help of Lemma 4 from \cite{freund1997decision}.
\end{sproof}

\section{Extension to Online Adaptive Variance Minimization}

In this section, we study the closely related problem of online
adaptive variance minimization. The problem is defined as follows:
At each time $t$, we first select a vector $\mathbf{y_t}\in \Omega$,
and then a covariance matrix $C_t\in \mathbb{R}^{n\times n}$ such that
$0 \preceq C_t\preceq I$ is revealed. The goal is to
minimize the adaptive regret defined as:
\begin{equation}
\label{eq::general_var}
\mathcal{R}_{a}^{\text{var}} = \max_{[r,s]\subset [1,T]}\Big\{
\sum\limits_{t=r}^s \mathbb{E}[\mathbf{y_t}^TC_t\mathbf{y_t}] -
\min_{\mathbf{u}\in\Omega}\sum\limits_{t=r}^s \mathbf{u}^TC_t\mathbf{u}  \Big\}
\end{equation}
where the expectation is taken over the probability distribution of $\mathbf{y_t}$.

This problem has two different situations corresponding to different parameter space $\Omega$ of $\mathbf{y_t}$ and $\mathbf{u}$.

\textbf{Situation 1:} 
When $\Omega$ is the set of $\{\mathbf{x} | \left\|\mathbf{x}\right\|_2 = 1 \}$ (e.g., the unit vector space),
the solution to $\min_{\mathbf{u}\in\Omega}\sum_{t=r}^s \mathbf{u}^TC_t\mathbf{u}$ is the minimum eigenvector
of the matrix $\sum_{t=r}^sC_t$.

\textbf{Situation 2:} When $\Omega$ is the probability simplex (e.g., $\Omega$ is equal to $\mathcal{B}_{\text{1}}^\text{n}$),
it corresponds to the risk minimization in stock portfolios \cite{markowitz1952portfolio}. 

We will start with \textbf{Situation 1} since it is highly related to the previous section.

\subsection{Online Adaptive Variance Minimization over the Unit vector space}

We begin with the observation of the following equivalence \cite{warmuth2006online}:
\begin{equation}
\min_{\left\|\mathbf{u}\right\|_2 = 1} \mathbf{u}^TC\mathbf{u} = \min_{U\in \mathscr{B}_{\text{1}}^\text{n}}\Tr(UC) 
\end{equation}
where $C$ is any covariance matrix, 
and $\mathscr{B}_{\text{1}}^\text{n}$ is the set of all density matrices.

Thus, the problem in (\ref{eq::general_var}) can be reformulated as:
\begin{equation}
\label{eq::var_unit_form}
\mathcal{R}_{a}^{\text{var-unit}} = \max_{[r,s]\subset [1,T]}\Big\{
\sum\limits_{t=r}^s \Tr(Y_tC_t) -
\min_{U\in\mathscr{B}_{\text{1}}^\text{n}}\sum\limits_{t=r}^s \Tr(UC_t)  \Big\}
\end{equation}
where $Y_t\in\mathscr{B}_{\text{1}}^\text{n}$.

To see the equivalence between $\mathbb{E}[\mathbf{y_t}^TC_t\mathbf{y_t}]$ in Eq.(\ref{eq::general_var})
and $\Tr(Y_tC_t)$, 
we do the eigendecomposition of $Y_t = \sum_{i=1}^n\sigma_i\mathbf{y_i}\mathbf{y_i}^T$.
Then $\Tr(Y_tC_t)$ is equal to $\sum_{i=1}^n\sigma_i\Tr(\mathbf{y_i}\mathbf{y_i}^TC_t)$
$=$ $\sum_{i=1}^n\sigma_i\mathbf{y_i}^TC_t\mathbf{y_i}$. 
Since $Y_t\in\mathscr{B}_{\text{1}}^\text{n}$, 
the vector $\mathbf{\sigma}$ is a simplex vector, 
and $\sum_{i=1}^n\sigma_i\mathbf{y_i}^TC_t\mathbf{y_i}$ is equal to $\mathbb{E}[\mathbf{y_i}^TC_t\mathbf{y_i}]$ 
with probability distribution defined by the vector $\mathbf{\sigma}$.

If we examine Eq.(\ref{eq::var_unit_form}) and (\ref{eq::ref_adaptive_pca}) together,
we will see that they share some similarities:
First, they are almost the same if we set $n-k =1$ in Eq.(\ref{eq::ref_adaptive_pca}).
Also, $\mathbf{x_t}\mathbf{x_t}^T$ in Eq.(\ref{eq::ref_adaptive_pca}) is a special case of $C_t$ in Eq.(\ref{eq::var_unit_form}). 

Thus, it is possible to apply Algorithm \ref{alg::adaptive_pca} 
to solving the problem (\ref{eq::var_unit_form}) by setting $n-k =
1$. In this case, Algorithms \ref{alg::alg2} and \ref{alg::alg3} are
not needed.
This is summarized in Algorithm \ref{alg::adaptive_var_unit}.

\begin{algorithm}[tb]
    \caption{Online adaptive variance minimization over unit sphere}
    \label{alg::adaptive_var_unit}
\begin{algorithmic}[1]
    \STATE {\bfseries Input:} an initial density matrix $Y_1 \in \mathscr{B}_{1}^n$.
    \FOR{$t=1$ {\bfseries to} $T$}
    \STATE Perform eigendecomposition $Y_t = \widehat{D}\diag(\sigma_t)\widehat{D}^T$.
    \STATE Use the vector $\mathbf{y_t}=\widehat{D}[:,j]$ with probability $\sigma_{t,j}$.
    \STATE Receive covariance matrix $C_t$, which incurs the loss $\mathbf{y_t}^TC_t\mathbf{y_t}$ and expected loss $\Tr(Y_tC_t)$.
    \STATE Update $Y_{t+1}$ as:
    \begin{subequations}
    \small
    \label{eq::var_unit_update}
    \begin{align}
    \label{eq::var_unit_v_t+1}
    &V_{t+1} = \frac{\exp(\ln Y_t - \eta C_t)}{\Tr(\exp(\ln Y_t - \eta C_t))}\\
    \label{eq::var_unit_fix_share_pca}
    &\sigma_{t+1,i} = \frac{\alpha}{n} + (1-\alpha)v_{t+1,i}, 
    Y_{t+1} = \widehat{U}\diag(\sigma_{t+1})\widehat{U}^T
    \end{align}
    \end{subequations}
    where we apply eigendecomposition to $V_{t+1}$ as $V_{t+1} = \widehat{U}\diag(\mathbf{v_{t+1}})\widehat{U}^T$.
    \ENDFOR
\end{algorithmic}
\end{algorithm}

The theorem below is analogous to Theorem \ref{thm::adaptive_pca} in
the case that $n-k = 1$.

\begin{theorem}
\label{thm::adaptive_var_unit}
{\it
For any sequence of covariance matrices $C_1$, $\dots$, $C_T$
with $0\preceq C_t \preceq I$, and for 
$\min_{U\in\mathscr{B}_{\text{1}}^\text{n}}\sum\limits_{t=r}^s \Tr(UC_t) \le L$,
if we run Algorithm \ref{alg::adaptive_var_unit} with
$\alpha = 1/(T+1)$, 
$D = \ln(n(1+T))+1$,
and $\eta = \ln(1+\sqrt{2D/L})$, for any $T\ge 1$ we have:
\begin{equation*}
\mathcal{R}_a^{\text{var-unit}} \le O(\sqrt{2LD}+D) 
\end{equation*}    
}
\end{theorem}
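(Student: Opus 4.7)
The plan is to mirror the proof of Theorem~\ref{thm::adaptive_pca} in the specialization $n-k=1$, with $C_t\preceq I$ replacing $\mathbf{x_t}\mathbf{x_t}^T$. The key structural simplification is that $\mathscr{B}_1^n$ is the full set of density matrices, so the capping step is vacuous: what was called $\widehat{W}_t$ in the PCA proof collapses to $Y_t$, no Generalized Pythagorean projection is needed, and Algorithm~\ref{alg::adaptive_var_unit} is exactly the $n-k=1$ specialization of Algorithm~\ref{alg::adaptive_pca}.

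First, I would obtain the per-step inequality analogous to Lemma~\ref{lem::adaptive_pca_step_ineq}. The proof of Theorem~\ref{thm::quantum_ineq} only uses $\mathbf{x_t}\mathbf{x_t}^T\preceq I$, so it applies verbatim with $C_t$ in place of $\mathbf{x_t}\mathbf{x_t}^T$ and any $Q_t\in\mathscr{B}_1^n$. Expanding the two quantum relative entropies and cancelling the $\Tr(Q_t\ln Q_t)$ terms then yields
\begin{equation*}
\Tr(Y_tC_t)(1-\exp(-\eta))-\eta\Tr(Q_tC_t)\le -\Tr(Q_t\ln Y_t)+\Tr(Q_t\ln V_{t+1}).
\end{equation*}

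Next, I would fix $U^\star\in\argmin_{U\in\mathscr{B}_1^n}\sum_{t=r}^s\Tr(UC_t)$, set $Q_t=U^\star$ for $t\in[r,s]$, and decompose each right-hand-side summand as $\bar A_t+\bar B_t$ with $\bar A_t=-\Tr(Q_t\ln Y_t)+\Tr(Q_{t-1}\ln V_t)$ and $\bar B_t=-\Tr(Q_{t-1}\ln V_t)+\Tr(Q_t\ln V_{t+1})$, exactly as in the sketch of Theorem~\ref{thm::adaptive_pca}. The $\bar B_t$ sum telescopes across $[r,s]$. For the $\bar A_t$ sum, the fixed-share step in Eq.(\ref{eq::var_unit_fix_share_pca}) guarantees both $\sigma_{t,i}\ge\alpha/n$ and $\sigma_{t,i}\ge(1-\alpha)v_{t,i}$ in the shared eigenbasis of $Y_t$ and $V_t$; consequently the single ``transition'' step at $t=r$ contributes at most $\ln(n/\alpha)$ and each of the remaining ``non-transition'' steps contributes at most $-\ln(1-\alpha)$. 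With $\alpha=1/(T+1)$ these aggregate to the constant $D=\ln(n(1+T))+1$.

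Finally, rearranging and using non-positivity of the boundary entropy contributions gives $(1-\exp(-\eta))\sum_{t=r}^s\Tr(Y_tC_t)\le \eta L + D$, and applying Lemma~4 of \cite{freund1997decision} with $\eta=\ln(1+\sqrt{2D/L})$ balances the two error contributions to produce $\mathcal{R}_a^{\text{var-unit}}\le O(\sqrt{2LD}+D)$. The main subtlety, just as in Theorem~\ref{thm::adaptive_pca}, is the noncommutativity between $U^\star$ and the matrix logarithms in $\bar A_t$; because the fixed-share step here acts diagonally in the eigenbasis of $V_{t+1}$, the matrices $Y_t$ and $V_t$ share eigenvectors and $\ln Y_t-\ln V_t\succeq \ln(1-\alpha)\,I$, so the corresponding PSD-trace inequality carries through cleanly. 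This is therefore not a new obstacle beyond what was already handled for the PCA proof.
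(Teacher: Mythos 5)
Your proposal follows essentially the same route as the paper's proof sketch: invoke Theorem 2 of Warmuth and Kuzmin (which needs only $0\preceq C_t\preceq I$, not that $C_t$ is rank one) to get the per-step quantum-entropy inequality, observe that capping is vacuous when $n-k=1$ so $\widehat{W}_t = Y_t$, split the entropy increment as $\bar A_t + \bar B_t$, telescope $\bar B$, control $\bar A$ via the fixed-share lower bounds $\sigma_{t,i}\ge\alpha/n$ and $\sigma_{t,i}\ge(1-\alpha)v_{t,i}$ in the shared eigenbasis, and finish with Lemma 4 of Freund--Schapire. This matches the paper's argument for Theorem~\ref{thm::adaptive_pca} specialized to $n-k=1$, which is exactly what the paper's proof sketch for Theorem~\ref{thm::adaptive_var_unit} prescribes.
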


\begin{sproof}
Similar inequality can be obtained as in Lemma \ref{lem::adaptive_pca_step_ineq}
by using the result of Theorem 2 in \cite{warmuth2006online}.
The rest follows the proof of Theorem \ref{thm::adaptive_pca}.
\end{sproof}

In order to apply the above theorem, we need to either estimate the step size $\eta$ heuristically
or estimate the upper bound $L$,
which may not be easily done.

In the next theorem, we show that 
we can still upper bound the $\mathcal{R}_a^{\text{var-unit}}$ without knowing $L$,
but the upper bound is a function of time horizon $T$ instead of the upper bound $L$.

Before we get to the theorem, we need the following lemma which lifts the
vector case of Lemma 1 in \cite{cesa2012new} to the density matrix case:
\begin{lemma}
\label{lem::mat_lem1_bianchi}
For any $\eta \ge 0$, $t\ge 1$, any covariance matrix $C_t$ with $0\preceq C_t\preceq I$,
and for any $Q_t\in\mathscr{B}_{1}^n$, Algorithm \ref{alg::adaptive_var_unit} satisfies:
\begin{equation*}
\begin{array}{l}
\Tr(Y_tC_t) - \Tr(Q_tC_t) \\
\quad \quad \quad \quad \quad \le \frac{1}{\eta}\Big( \Tr(Q_t\ln V_{t+1}) - \Tr(Q_t\ln Y_t) \Big) + \frac{\eta}{2}
\end{array}
\end{equation*}
\end{lemma}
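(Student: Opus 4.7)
The plan is to mirror the scalar Hedge-style proof of Lemma~1 in \cite{cesa2012new}, with the Golden-Thompson inequality replacing the elementary exponential manipulations available in the scalar case. The whole argument boils down to a single inequality about the log-partition function $\ln Z_t := \ln\Tr\exp(\ln Y_t - \eta C_t)$, after which the lemma follows by algebra.

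First, I would unpack the right-hand side using the closed form of $V_{t+1}$ from Eq.\eqref{eq::var_unit_v_t+1}. Using the identity $\ln(cM) = (\ln c)I + \ln M$ for a positive scalar $c$ and symmetric positive definite $M$, we get $\ln V_{t+1} = \ln Y_t - \eta C_t - (\ln Z_t)\,I$. Taking trace against $Q_t$ and using $\Tr Q_t = 1$ yields
\[
\Tr(Q_t \ln V_{t+1}) - \Tr(Q_t \ln Y_t) = -\eta\, \Tr(Q_t C_t) - \ln Z_t.
\]
Substituting this into the statement of the lemma makes the two $\Tr(Q_t C_t)$ terms cancel, and the claim reduces to the single inequality
\[
\ln Z_t \;\le\; -\eta\,\Tr(Y_t C_t) + \tfrac{\eta^2}{2}.
\]

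Next I would apply the Golden-Thompson inequality $\Tr\exp(A+B) \le \Tr(\exp A\,\exp B)$ for symmetric $A,B$, with $A = \ln Y_t$ and $B = -\eta C_t$, to get $Z_t \le \Tr(Y_t \exp(-\eta C_t))$. The scalar inequality $e^{-y} \le 1 - y + y^2/2$ for $y\ge 0$ (verified by checking that $f(y)=1-y+y^2/2-e^{-y}$ has $f(0)=f'(0)=0$ and $f''\ge 0$) then lifts via the spectral theorem to the operator bound $\exp(-\eta C_t) \preceq I - \eta C_t + \tfrac{\eta^2}{2} I$, where I use $0\preceq C_t\preceq I$ so that the spectrum of $\eta C_t$ lies in $[0,\eta]$ and $C_t^2 \preceq I$. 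Tracing against $Y_t$ and using $\Tr Y_t = 1$ gives
\[
\Tr\!\bigl(Y_t \exp(-\eta C_t)\bigr) \;\le\; 1 - \eta\,\Tr(Y_t C_t) + \tfrac{\eta^2}{2}.
\]
Finally, $\ln(1+u) \le u$ converts this multiplicative bound into the required logarithmic bound on $\ln Z_t$.

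The main technical ingredient is Golden-Thompson; once it is invoked, the remainder is a direct matrix analogue of the one-line Hedge inequality from \cite{cesa2012new}. The only other step that requires care is the spectral-calculus lifting of the scalar Taylor-type inequality, which is routine here because $C_t$ is symmetric with spectrum in $[0,1]$ by hypothesis.
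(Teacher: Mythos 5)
Your proof is correct, and the skeleton is the same as the paper's: unpack $\ln V_{t+1}$ to reduce the claim to a log-partition bound $\ln Z_t \le -\eta\Tr(Y_tC_t) + \tfrac{\eta^2}{2}$, apply Golden--Thompson to get $Z_t \le \Tr(Y_t\exp(-\eta C_t))$, bound the matrix exponential from above, trace against $Y_t$, and close with $\ln(1+u)\le u$. The one place you diverge is in how $\exp(-\eta C_t)$ is bounded. The paper invokes Lemma~2.1 of \cite{tsuda2005matrix} with $\rho_1=-\eta$, $\rho_2=0$ to get the tight line interpolation $\exp(-\eta C_t)\preceq I - C_t(1-e^{-\eta})$, which after tracing and $\ln(1-x)\le -x$ leaves the residual scalar inequality $(\eta-1+e^{-\eta})\Tr(Y_tC_t)\le \tfrac{\eta^2}{2}$; this is then closed using $\Tr(Y_tC_t)\le 1$ and convexity of $\tfrac{\eta^2}{2}-e^{-\eta}$. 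You instead lift the scalar Taylor bound $e^{-y}\le 1-y+\tfrac{y^2}{2}$ (for $y\ge 0$) through the spectral calculus, using $C_t^2\preceq I$ to absorb the quadratic term, which hands you the $\tfrac{\eta^2}{2}$ directly and avoids the citation. The two routes are morally identical --- both ultimately rest on the same scalar fact $e^{-\eta}\le 1-\eta+\tfrac{\eta^2}{2}$, just applied before versus after the trace --- but yours is slightly more self-contained, trading a sharper intermediate matrix bound for a simpler endgame.
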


Now we are ready to present the upper bound on the regret for
Algorithm~\ref{alg::adaptive_var_unit}. 
\begin{theorem}
\label{thm::T_depend_adaptive_var_unit}
{\it
For any sequence of covariance matrices $C_1$, $\dots$, $C_T$
with $0\preceq C_t \preceq I$,
if we run Algorithm \ref{alg::adaptive_var_unit} with
$\alpha = 1/(T+1)$
and $\eta = \frac{\sqrt{\ln(n(1+T))}}{\sqrt{T}}$, for any $T\ge 1$ we have:
\begin{equation*}
\mathcal{R}_a^{\text{var-unit}} \le O\Big(\sqrt{T\ln\big(n(1+T)\big)}\Big) 
\end{equation*}    
}
\end{theorem}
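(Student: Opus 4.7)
The plan is to apply Lemma~\ref{lem::mat_lem1_bianchi} with the interval-optimal comparator, sum over $t\in[r,s]$, and then exploit the fact that the fixed-share update can be written as $Y_{t+1}=\frac{\alpha}{n}I+(1-\alpha)V_{t+1}$ (because the algorithm builds $Y_{t+1}$ from $V_{t+1}$'s eigendecomposition, so the two density matrices share an eigenbasis) to reduce the resulting log-ratio sum to a telescope plus a small $\alpha$-correction. Let $U^\star\in\argmin_{U\in\mathscr{B}_1^n}\sum_{t=r}^s\Tr(UC_t)$ denote the interval-optimal comparator. Applying Lemma~\ref{lem::mat_lem1_bianchi} with $Q_t=U^\star$ and summing over $t=r,\dots,s$ yields
\begin{equation*}
\sum_{t=r}^s\bigl[\Tr(Y_tC_t)-\Tr(U^\star C_t)\bigr]\le\frac{1}{\eta}\sum_{t=r}^s\bigl[\Tr(U^\star\ln V_{t+1})-\Tr(U^\star\ln Y_t)\bigr]+\frac{(s-r+1)\eta}{2}.
\end{equation*}

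Next I would rewrite the log-ratio sum by shifting the index in the $\ln V_{t+1}$ terms and peeling off the boundaries to get
\begin{equation*}
\sum_{t=r}^s\bigl[\Tr(U^\star\ln V_{t+1})-\Tr(U^\star\ln Y_t)\bigr]=\Tr(U^\star\ln V_{s+1})-\Tr(U^\star\ln Y_r)+\sum_{t=r+1}^s\bigl[\Tr(U^\star\ln V_t)-\Tr(U^\star\ln Y_t)\bigr].
\end{equation*}
Three bounds then finish the argument: $\Tr(U^\star\ln V_{s+1})\le 0$ because $V_{s+1}$ is a density matrix and so $\ln V_{s+1}\preceq 0$; $-\Tr(U^\star\ln Y_r)\le\ln(n/\alpha)$ because the fixed-share step forces every eigenvalue of $Y_t$ (for $t\ge 2$) to be at least $\alpha/n$, with the boundary case $r=1$ handled by initializing $Y_1=I/n$ so that the same lower bound holds; and per round, since $V_t$ and $Y_t$ share an eigenbasis with $\sigma_{t,i}=\alpha/n+(1-\alpha)v_{t,i}\ge(1-\alpha)v_{t,i}$, we obtain $\ln Y_t\succeq\ln V_t+\ln(1-\alpha)I$ on the common eigenbasis and hence $\Tr(U^\star\ln V_t)-\Tr(U^\star\ln Y_t)\le -\ln(1-\alpha)$.

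Collecting these estimates with $\alpha=1/(T+1)$ and the elementary bound $-\ln(1-\alpha)\le 1/T$, the per-round contributions sum to at most $(T-1)/T\le 1$, so the entire log-ratio sum is bounded by $\ln(n(1+T))+1$. The per-interval regret is therefore at most $\frac{\ln(n(1+T))+1}{\eta}+\frac{T\eta}{2}$, and the choice $\eta=\sqrt{\ln(n(1+T))/T}$ in the statement balances the two terms up to lower-order constants, yielding the claimed $O\bigl(\sqrt{T\ln(n(1+T))}\bigr)$ bound uniformly over all subintervals $[r,s]\subset[1,T]$.

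The step I expect to require the most care is the shared-eigenbasis observation that lifts the scalar inequality $\sigma_{t,i}\ge(1-\alpha)v_{t,i}$ to the matrix inequality $\ln Y_t\succeq\ln V_t+\ln(1-\alpha)I$; everything else parallels the proofs of Theorems~\ref{thm::adaptive_subset_expert} and~\ref{thm::adaptive_pca}, the main structural difference being that Lemma~\ref{lem::mat_lem1_bianchi} already supplies an additive $\eta/2$ per round in place of the $(1-\exp(-\eta))$ factor from Theorem~\ref{thm::quantum_ineq}, which is precisely what enables the $L$-free, $\sqrt{T}$-style bound.
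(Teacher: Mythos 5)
Your proof is correct and follows essentially the same route as the paper's: apply Lemma~\ref{lem::mat_lem1_bianchi}, bound the accumulated log-ratio using the fixed-share step's eigenvalue lower bound $\sigma_{t,i}\ge\max\{\alpha/n,(1-\alpha)v_{t,i}\}$, telescope, and balance $\eta$ against $T$. The only difference is bookkeeping: the paper sums over the full horizon $t=1,\dots,T$ with the indicator-comparator $Q_t=Q\cdot\mathbb{1}[t\in[r,s]]$ (and the $Q_t=0$ case), recycling the $\bar A+\bar B$ and $D_{TV}/m(\mathbf{q_{1:T}})$ machinery from Theorem~\ref{thm::adaptive_pca}, whereas you sum directly over $t\in[r,s]$ and peel off the two boundary terms explicitly ($\Tr(U^\star\ln V_{s+1})\le 0$ and $-\Tr(U^\star\ln Y_r)\le\ln(n/\alpha)$) plus at most $T-1$ interior terms each bounded by $-\ln(1-\alpha)$. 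Both yield the same intermediate bound $\frac{1}{\eta}\bigl(\ln\frac{n}{\alpha}+T\ln\frac{1}{1-\alpha}\bigr)+\frac{\eta}{2}T$ and hence the same final rate; your windowed accounting is arguably a bit cleaner since it makes the boundary contributions visible without invoking the general $D_{TV}$ calculus. One small point you correctly flag: the bound $-\Tr(U^\star\ln Y_r)\le\ln(n/\alpha)$ at $r=1$ requires $Y_1=I/n$ (or at least $\min_i\sigma_{1,i}\ge\alpha/n$); the paper's algorithm statement only says $Y_1\in\mathscr{B}_1^n$, but its own proof implicitly uses uniform initialization as well, so this is an inherited convention rather than a gap in your argument.
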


\begin{proof}

In the proof, we will use two cases of $Q_t$: $Q_t\in\mathscr{B}_{1}^n$, and $Q_t = 0$.

From Lemma \ref{lem::mat_lem1_bianchi}, the following inequality is valid for both cases of $Q_t$:
\begin{equation}
\begin{array}{l}
\Tr(Y_tC_t) - \Tr(Q_tC_t) \\
\quad \quad \quad \quad \quad \le \frac{1}{\eta}\Big( \Tr(Q_t\ln V_{t+1}) - \Tr(Q_t\ln Y_t) \Big) + \frac{\eta}{2}
\end{array}
\end{equation}

Follow the same analysis as in the proof of Theorem \ref{thm::adaptive_pca},
we first do the eigendecomposition to $Q_t$ as $Q_t = \widetilde{D}\diag(q_t)\widetilde{D}^T$.
Since $\left\|q_t\right\|_1$ is either $1$ or $0$, we will re-write the above inequality as:
\begin{equation}
\begin{array}{l}
\left\|q_t\right\|_1\Tr(Y_tC_t) - \Tr(Q_tC_t) \\
\quad \quad \quad \le \frac{1}{\eta}\Big( \Tr(Q_t\ln V_{t+1}) - \Tr(Q_t\ln Y_t) \Big) + \frac{\eta}{2}\left\|q_t\right\|_1
\end{array}
\end{equation}

Analyzing the term $\Tr(Q_t\ln V_{t+1}) - \Tr(Q_t\ln Y_t)$ in the above inequality 
is the same as the analysis of the Eq.(\ref{eq::quantum_split_two_parts}) in the appendix.

Thus, summing over $t=1$ to $T$ to the above inequality, and setting $Q_t = Q\in\mathscr{B}_1^n$ for $t=r,\dots,s$
and $0$ elsewhere,
we will have 
\begin{equation}
\begin{array}{l}
\sum\limits_{t=r}^s\Tr(Y_tC_t) - \min\limits_{U\in\mathscr{B}_1^n}\sum\limits_{t=r}^s\Tr(UC_t)\\
\quad\quad\quad\quad\quad\quad\le \frac{1}{\eta}\Big(\ln\frac{n}{\alpha}+T\ln\frac{1}{1-\alpha}\Big) + \frac{\eta}{2}T,
\end{array}
\end{equation} 
since it holds for any $Q\in\mathscr{B}_1^n$.

After plugging in the expression of $\eta$ and $\alpha$, we will have
\begin{equation}
\begin{array}{l}
\sum\limits_{t=r}^s\Tr(Y_tC_t) - \min\limits_{U\in\mathscr{B}_1^n}\sum\limits_{t=r}^s\Tr(UC_t) \\
\quad\quad\quad\quad\quad\quad\quad\quad\quad\quad\le O\Big(\sqrt{T\ln\big(n(1+T)\big)}\Big) 
\end{array}
\end{equation}

Since the above inequality holds for any $1\le r\le s\le T$, 
we will put a $\max\limits_{[r,s]\subset [1,T]}$ in the left part,
which proves the result.
\end{proof}

\subsection{Online Adaptive Variance Minimization over the Simplex space}

We first re-write the problem in Eq.(\ref{eq::general_var}) when
$\Omega$ is the simplex below:
\small
\begin{equation}
\label{eq::var_simplex_form}
\mathcal{R}_{a}^{\text{var-sim}} = \max_{[r,s]\subset [1,T]}\Big\{
\sum\limits_{t=r}^s \mathbb{E}[\mathbf{y_t}^TC_t\mathbf{y_t}] -
\min_{\mathbf{u}\in\mathcal{B}_1^n}\sum\limits_{t=r}^s \mathbf{u}^TC_t\mathbf{u}  \Big\}
\end{equation}
\normalsize
where $\mathbf{y_t}\in\mathcal{B}_1^n$, and $\mathcal{B}_1^n$ is the simplex set.

When $r = 1$ and $s = T$, the problem reduces to the static regret problem,
which is solved in \cite{warmuth2006online} by the exponentiated gradient algorithm as below:
\begin{equation}
y_{t+1,i} = \frac{y_{t,i}\exp\big(-\eta(C_t\mathbf{y_t})_i\big)}{\sum_i y_{t,i}\exp\big(-\eta(C_t\mathbf{y_t})_i\big)}
\end{equation}

As is done in the previous sections, we add the fixed-share step after the above update,
which is summarized in Algorithm \ref{alg::adaptive_var_simplex}.

\begin{algorithm}[tb]
    \caption{Online adaptive variance minimization over simplex}
    \label{alg::adaptive_var_simplex}
\begin{algorithmic}[1]
    \STATE {\bfseries Input:} an initial vector $\mathbf{y_1} \in \mathcal{B}_{1}^n$.
    \FOR{$t=1$ {\bfseries to} $T$}
    \STATE Receive covariance matrix $C_t$.
    \STATE Incur the loss $\mathbf{y_t}^TC_t\mathbf{y_t}$.
    \STATE Update $\mathbf{y_{t+1}}$ as:
    \begin{subequations}
    \small
    \label{eq::var_simplex_update}
    \begin{align}
    \label{eq::var_simplex_v_t+1}
    &v_{t+1,i} = \frac{y_{t,i}\exp\big(-\eta(C_t\mathbf{y_t})_i\big)}{\sum_i y_{t,i}\exp\big(-\eta(C_t\mathbf{y_t})_i\big)},\\
    \label{eq::var_unit_fix_share_pca}
    &y_{t+1,i} = \frac{\alpha}{n} + (1-\alpha)v_{t+1,i}.
    \end{align}
    \end{subequations}
    \ENDFOR
\end{algorithmic}
\end{algorithm}


With the update of $y_t$ in the Algorithm \ref{alg::adaptive_var_simplex},
we have the following theorem:
\begin{theorem}
\label{thm::adaptive_var_simplex}
{\it
For any sequence of covariance matrices $C_1$, $\dots$, $C_T$
with $0\preceq C_t \preceq I$, and for 
$\min_{\mathbf{u}\in\mathcal{B}_{\text{1}}^\text{n}}\sum\limits_{t=r}^s \mathbf{u}^TC_t\mathbf{u} \le L$,
if we run Algorithm \ref{alg::adaptive_var_simplex} with
$\alpha = 1/(T+1)$, $c = \frac{\sqrt{2\ln\big((1+T)n\big)+2}}{\sqrt{L}}$,
$b = \frac{c}{2}$, $a = \frac{b}{2b+1}$, 
and $\eta = 2a$, for any $T\ge 1$ we have:
\begin{equation*}
\mathcal{R}_a^{\text{var-sim}} \le 2\sqrt{2L\Big(\ln\big((1+T)n\big)+1\Big)} + 2\ln\big((1+T)n\big)
\end{equation*}    
}
\end{theorem}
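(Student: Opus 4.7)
The proof will mirror that of Theorem~\ref{thm::adaptive_subset_expert}, preceded by one reduction to linearize the quadratic objective, and paired with a tuning that yields a first-order (small-loss) bound.

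First, since each $C_t \succeq 0$, expanding $(\mathbf{y_t}-\mathbf{u})^T C_t (\mathbf{y_t}-\mathbf{u}) \ge 0$ gives the convexity inequality
\[
\mathbf{y_t}^T C_t \mathbf{y_t} - \mathbf{u}^T C_t \mathbf{u} \le 2(\mathbf{y_t}-\mathbf{u})^T C_t \mathbf{y_t} = 2(\mathbf{y_t}-\mathbf{u})^T \mathbf{\ell_t},
\]
where $\mathbf{\ell_t} = C_t \mathbf{y_t}$. The entries of $\mathbf{\ell_t}$ are uniformly bounded because $0 \preceq C_t \preceq I$ and $\mathbf{y_t}$ lies in the simplex, so the update~(\ref{eq::var_simplex_v_t+1}) is a standard hedge step against these linear losses. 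In particular, the quadratic adaptive regret $\mathcal{R}_a^{\text{var-sim}}$ is at most twice the adaptive linear regret with loss sequence $\mathbf{\ell_1},\ldots,\mathbf{\ell_T}$.

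Second, Algorithm~\ref{alg::adaptive_var_simplex} acts on the uncapped simplex $\mathcal{B}_1^n$, so no projection step is needed: the fixed-share map already returns a valid simplex vector. Repeating the entropic computation of Lemma~\ref{lem::adaptive_expert_step_ineq} with the Pythagorean projection step now trivial yields, for every $\mathbf{q_t}\in\mathcal{B}_1^n$,
\[
\mathbf{y_t}^T \mathbf{\ell_t}\,(1-e^{-\eta}) - \eta\,\mathbf{q_t}^T \mathbf{\ell_t} \;\le\; \sum_{i=1}^n q_{t,i}\ln\frac{v_{t+1,i}}{y_{t,i}}.
\]

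Third, sum this inequality over $t\in[r,s]$ for a fixed comparator $\mathbf{q_t}\equiv\mathbf{u}$ and decompose the right-hand side as $A+B$ exactly as in Eq.~(\ref{eq::analysis_entropy_expert}). The $A$-terms are controlled via the fixed-share lower bound $y_{t,i}\ge\alpha/n$ and the $B$-terms telescope; with $\alpha=1/(T+1)$ the cumulative right-hand side is at most $\ln(n(T+1))+1$. Applying Lemma~4 of \cite{freund1997decision} to invert the $(1-e^{-\eta})$ coefficient, under the tuning chain $c=\sqrt{D/L}$, $b=c/2$, $a=b/(2b+1)$, $\eta=2a$, produces a linear-regret bound of order $\sqrt{2LD}+D$. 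Multiplying by two from Step~1 gives the claimed adaptive regret bound.

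The principal obstacle is this final step. The linearization replaces the comparator loss $L=\sum_t \mathbf{u}^T C_t \mathbf{u}$ by the cross term $\sum_t \mathbf{u}^T C_t \mathbf{y_t}$, which is not directly bounded by $L$. The specific algebraic form $\eta=c/(c+1)$ (equivalently $e^\eta\ge 1+\eta$, giving $1-e^{-\eta}\ge \eta/(1+\eta)$) is selected so that this cross term, when controlled via the PSD Cauchy--Schwarz inequality $\mathbf{u}^T C_t \mathbf{y_t}\le \tfrac{1}{2}(\mathbf{u}^T C_t \mathbf{u}+\mathbf{y_t}^T C_t \mathbf{y_t})$, can be absorbed into the iterate loss term on the left-hand side, leaving a clean bound in $L$ alone.
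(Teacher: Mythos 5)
Your blueprint for the adaptive machinery (fixed-share lower bound $y_{t,i}\ge\alpha/n$, decomposition into the $A$ and $B$ terms, telescoping, and the bound $\ln\frac{n}{\alpha}+T\ln\frac{1}{1-\alpha}\le\ln\big(n(T+1)\big)+1$) matches the paper exactly. The genuine difference is how you get the per-step ``progress'' inequality for the quadratic loss. The paper does not linearize at all: it invokes Lemma~1 of \cite{warmuth2006online} (reproduced in the appendix as Lemma~\ref{lem::var_simplex_step}), which directly yields
\[
a\,\mathbf{y_t}^T C_t\mathbf{y_t} - b\,\mathbf{u_t}^T C_t\mathbf{u_t} \le d(\mathbf{u_t},\mathbf{y_t}) - d(\mathbf{u_t},\mathbf{v_{t+1}})
\]
for the exponentiated-gradient update in Eq.~(\ref{eq::var_simplex_v_t+1}), with the range condition on $C_t$ rather than on the surrogate loss vector. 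This already relates the iterate's quadratic loss to the comparator's quadratic loss with the tunable coefficients $a$, $b$, which is exactly what feeds the Freund--Schapire-style inversion at the end.

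Your route, by contrast, first linearizes via $(\mathbf{y_t}-\mathbf{u})^TC_t(\mathbf{y_t}-\mathbf{u})\ge0$ and then tries to run the classical Hedge inequality with $\mathbf{\ell_t}=C_t\mathbf{y_t}$ before repairing the cross term with the PSD inequality $\mathbf{u}^TC_t\mathbf{y_t}\le\tfrac12(\mathbf{u}^TC_t\mathbf{u}+\mathbf{y_t}^TC_t\mathbf{y_t})$. Two things break or remain unfinished. First, the Hedge bound you write, $\mathbf{y_t}^T\mathbf{\ell_t}(1-e^{-\eta})-\eta\,\mathbf{q_t}^T\mathbf{\ell_t}\le\sum_i q_{t,i}\ln(v_{t+1,i}/y_{t,i})$, requires $\ell_{t,i}\in[0,1]$, but $(C_t\mathbf{y_t})_i$ is in general not nonnegative (positive semidefiniteness of $C_t$ does not force nonnegative entries, so $(C_t\mathbf{y_t})_i$ can be negative even though $\mathbf{y_t}$ is in the simplex); you would need a shifted-loss or $[-1,1]$-range variant, which changes the constants and is not provided. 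Second, you correctly identify that, after linearization, the comparator term is $\sum_t\mathbf{u}^TC_t\mathbf{y_t}$ rather than $L$, and that the cross-term absorption plus the tuning $\eta=2b/(2b+1)$ must be chosen so that the iterate-loss coefficient on the left stays positive; but this step is left as a ``plan,'' not an argument, and with your proposed substitution the resulting coefficients $a=1-e^{-\eta}-\eta/2$, $b=\eta/2$ do not obviously reproduce the specific $(a,b,\eta)$ chain in the theorem statement. In short, you are re-deriving a version of the Warmuth--Kuzmin lemma from scratch, and the derivation as written has a range gap and an unfinished absorption step; citing that lemma (as the paper does) resolves both at once and lets the remainder of your argument go through verbatim.
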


\section{Experiments}
\label{sec:exp}
\begin{figure}
\vskip 0.0in
  \centering
  \subfigure[]{
    \label{fig::synthetic_subspace_all} 
    \includegraphics[height=5.cm]{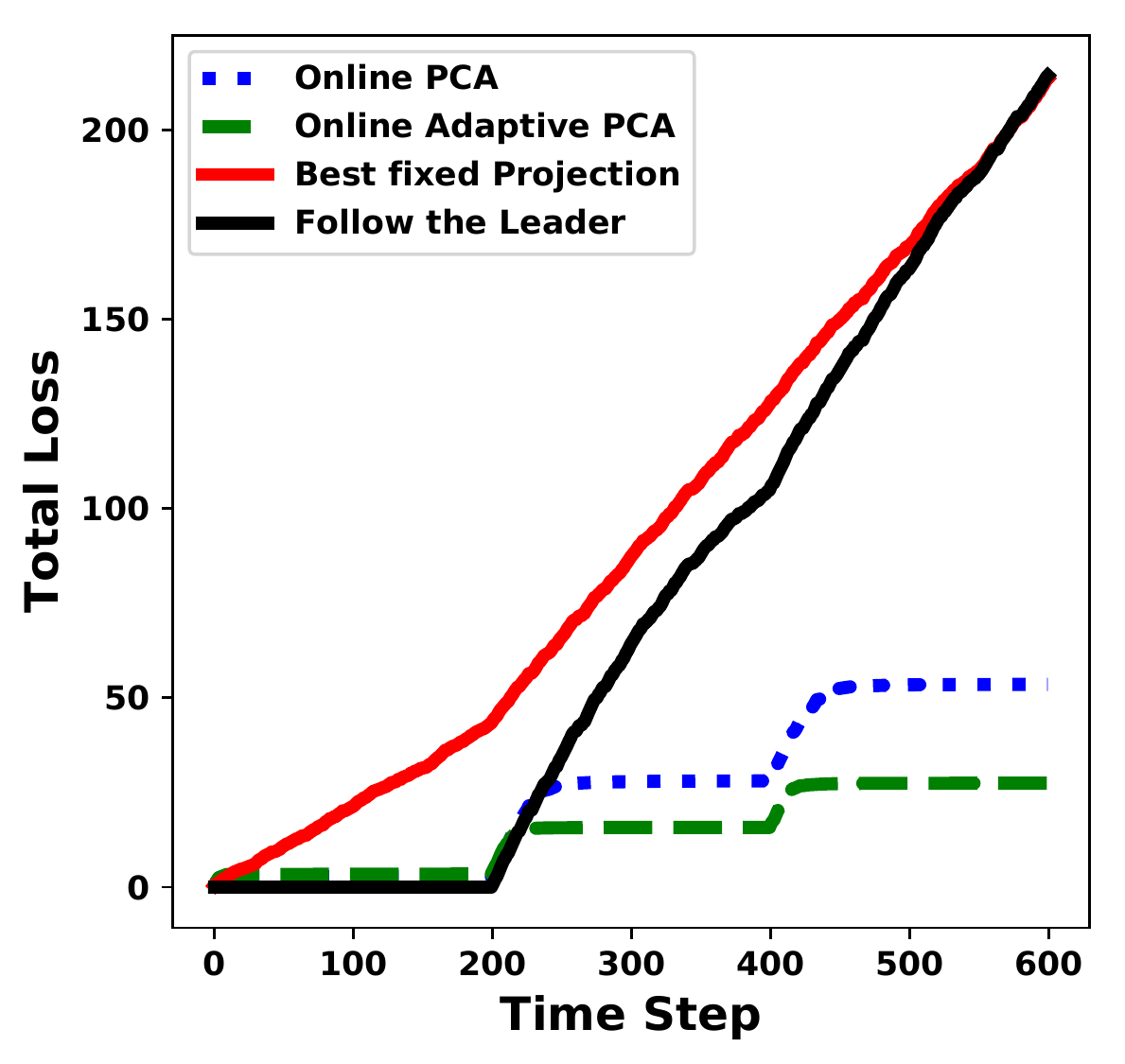}}
  \hspace{.1in}
  \subfigure[]{
    \label{fig::synthetic_subspace_online} 
    \includegraphics[height=5.cm]{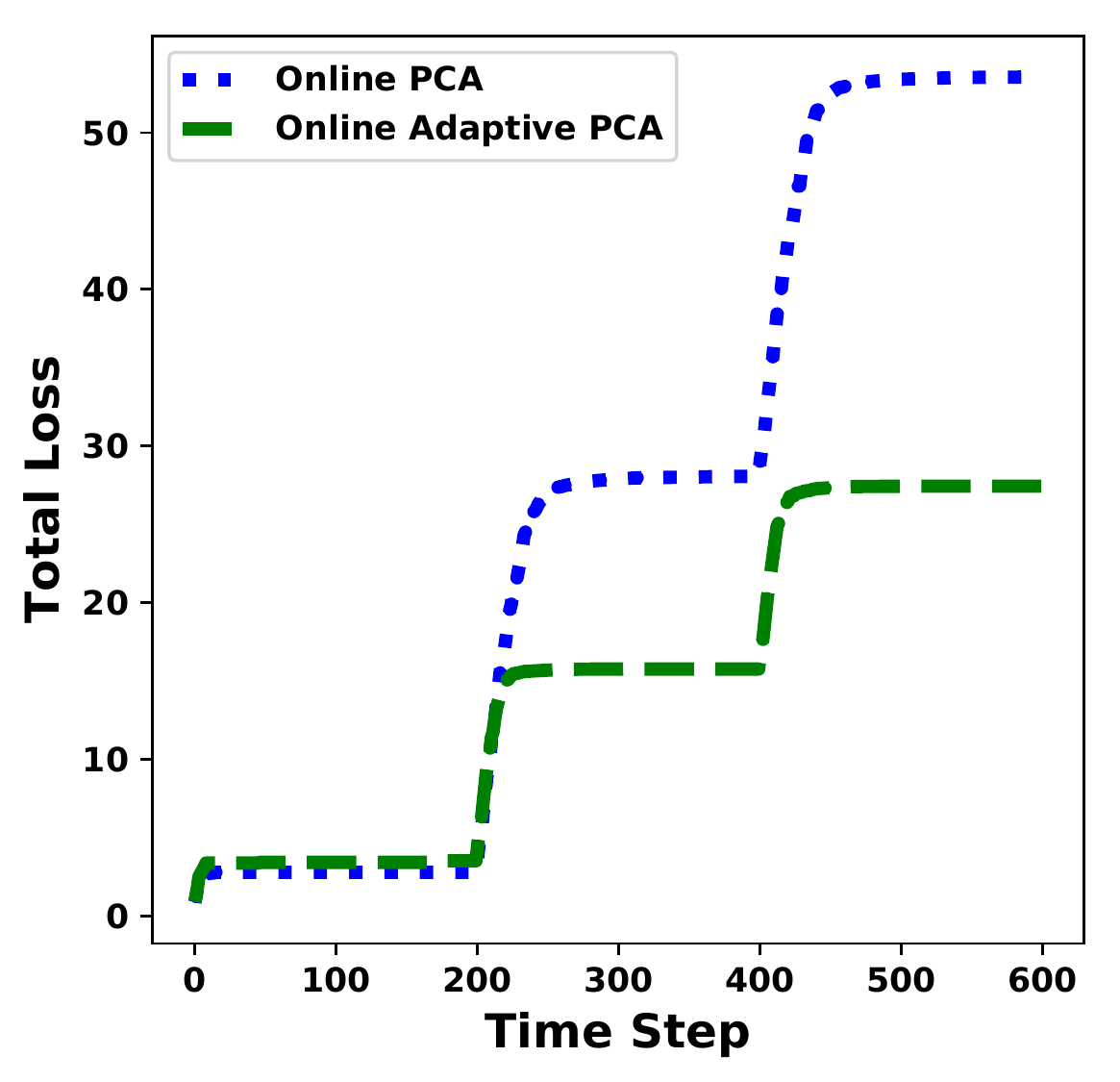}}
  \caption{
           Fig.\ref{fig::synthetic_subspace_all}: 
           The cumulative loss of the toy example with data samples coming from three different subspaces.
           Fig.\ref{fig::synthetic_subspace_online}:
           The detailed comparison for the two online algorithms. }
  \label{fig::synthetic_subspace} 
\vskip 0.in
\end{figure}

In this section, we use two examples to illustrate the effectiveness of our proposed online adaptive PCA algorithm.
The first example is synthetic, which shows that our proposed algorithm (denoted as Online Adaptive PCA) 
can adapt to the changing subspace faster than the method of
\cite{warmuth2008randomized}.
The second example uses the practical dataset Yale-B to demonstrate that the proposed algorithm
can have lower cumulative loss in practice when the data/face samples are coming from different persons.

The other algorithms that are used as comparators are:
1. Follow the Leader algorithm (denoted as Follow the Leader) \cite{kalai2005efficient}, 
which only minimizes the loss on the past history;
2. The best fixed solution in hindsight (denoted as Best fixed Projection),
which is the solution to the Problem described in Eq.(\ref{eq::best_fixed_sol_PCA});
3. The online static PCA (denoted as Online PCA) \cite{warmuth2008randomized}.
Other PCA algorithms are not included, since they are not designed for regret minimization.

\subsection{A Toy Example}

\begin{figure}
\vskip 0.0in
\centering
\includegraphics[height=6cm]{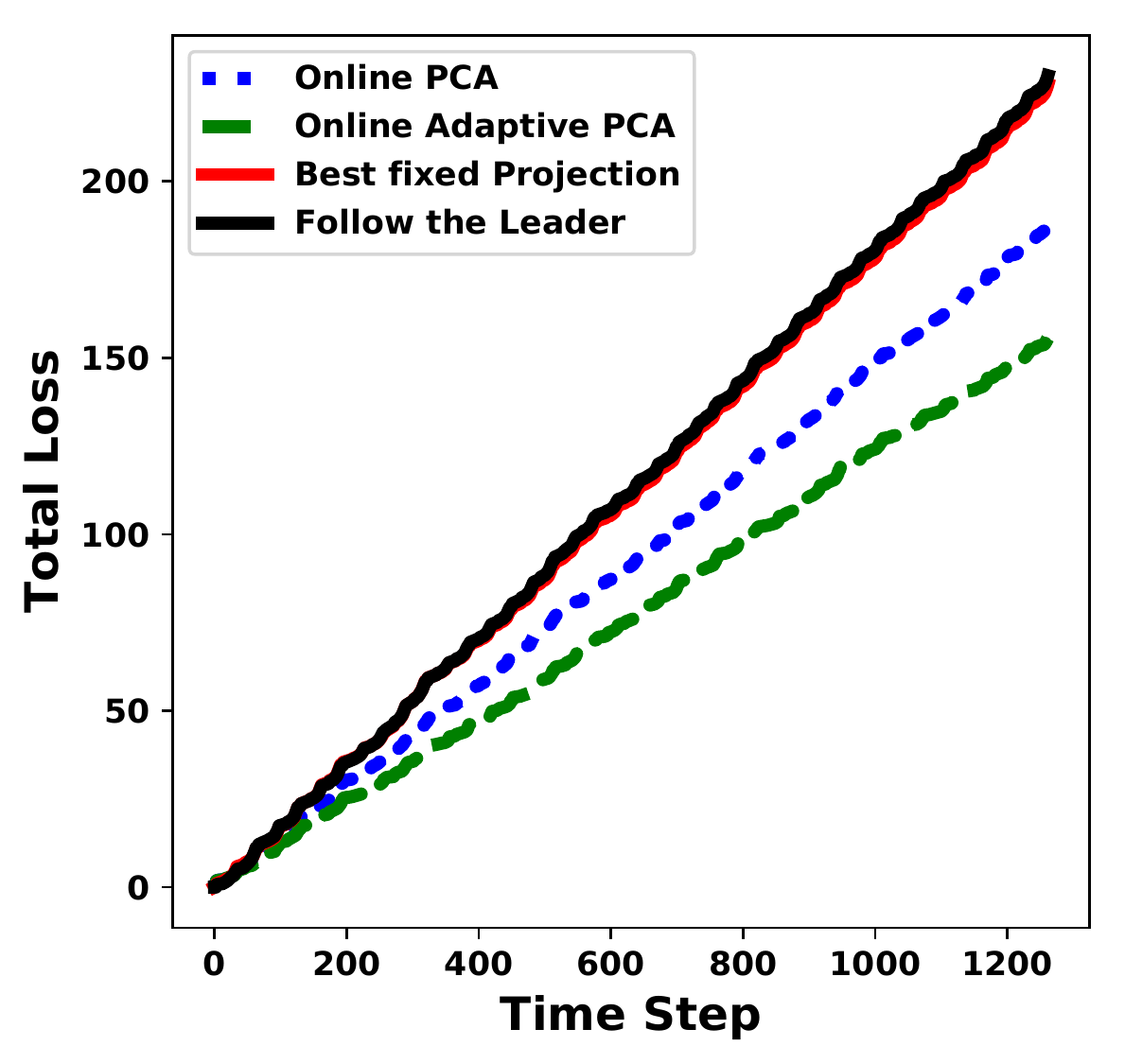}
\caption{The cumulative loss for the face example with data samples coming from 20 different persons}
\label{fig::yale_face}
\vskip 0.in
\end{figure}

In this toy example, we create the synthetic data samples coming from changing subspace/environment,
which is a similar setup as in \cite{warmuth2008randomized}.
The data samples are divided into three equal time intervals, 
and each interval has 200 data samples.
The 200 data samples within same interval is randomly generated by a Gaussian distribution 
with zero mean and data dimension equal to 20, and
the covariance matrix is randomly generated with rank equal to 2.
In this way, the data samples are from some unknown 2-dimensional subspace,
and any data sample with $\ell_2$-norm greater than 1 is normalized to 1.
Since the stepsize used in the two online algorithms is determined by 
the upper bound of the batch solution, we first find the upper bound and 
plug into the stepsize function, which gives $\eta = 0.19$.
We can tune the stepsize heuristically in practice 
and in this example we just use $\eta = 1$ and $\alpha =1\mathrm{e}{-5}$.

After all data samples are generated, we apply the previously mentioned algorithms
with $k=2$ and obtain the cumulative loss as a function of time steps,
which is shown in Fig.\ref{fig::synthetic_subspace}.
From this figure we can see that:
1. Follow the Leader algorithm is not appropriate in the setting 
where the sequential data is shifting over time.
2. The static regret is not a good metric under this setting,
since the best fixed solution in hindsight is suboptimal.
3. Compared with Static PCA, the proposed Adaptive PCA can
adapt to the changing environment faster,
which results in lower cumulative loss and
is more appropriate when the data is shifting over time.

\subsection{Face data Compression Example}
\label{sec:face}
In this example, we use the Yale-B dataset which is a collection of
face images. The data is split into 
20 time intervals corresponding to 20 different people. Within each interval,
there are 64 face image samples.
Like the previous example, we first normalize the data
to ensure its $\ell_2$-norm not greater than 1.
We use $k = 2$, which is the same as the previous example.
The stepsize $\eta$ is also tuned heuristically like the previous example,
which is equal to $5$ and $\alpha = 1\mathrm{e}{-4}$.

We apply the previously mentioned algorithms 
and again obtain the cumulative loss as the function of time steps,
which is displayed in Fig.\ref{fig::yale_face}.
From this figure we can see that
although there is no clear bumps indicating the shift from one subspace to another
as the Fig.\ref{fig::synthetic_subspace} of the toy example,
our proposed algorithm still has the lowest cumulative loss,
which indicates that upper bounding the adaptive regret
is still effective when the compressed faces are coming from different persons.

\section{Conclusion}

In this paper, we propose an online adaptive PCA algorithm, which
augments the previous online static PCA algorithm with a fixed-share
step. 
However,
different from the previous online PCA algorithm which is designed
to minimize the static regret,
the proposed online adaptive PCA algorithm aims to
minimize the adaptive regret
which is more appropriate when the underlying environment is changing
or the sequential data is shifting over time.
We demonstrate theoretically and experimentally that our algorithm can adapt to the
changing environments. 
Furthermore, we extend the online adaptive PCA algorithm 
to online adaptive variance minimization problems.

One may note that the proposed algorithms suffer from the per-iteration computation complexity of $O(n^3)$
due to the eigendecomposition step, although some tricks mentioned in \cite{arora2012stochastic} 
could be used to make it comparable with
incremental PCA of $O(k^2n)$.
For the future work,
one possible direction is to investigate algorithms with slightly worse adaptive regret bound
but with better per-iteration computation complexity.

\bibliography{online_PCA}

\begin{thebibliography}{10}

\bibitem{zinkevich2003online}
Martin Zinkevich.
\newblock Online convex programming and generalized infinitesimal gradient
  ascent.
\newblock In {\em Proceedings of the 20th International Conference on Machine
  Learning (ICML-03)}, pages 928--936, 2003.

\bibitem{shalev2012online}
Shai Shalev-Shwartz et~al.
\newblock Online learning and online convex optimization.
\newblock {\em Foundations and Trends{\textregistered} in Machine Learning},
  4(2):107--194, 2012.

\bibitem{yuan2018online}
Jianjun Yuan and Andrew Lamperski.
\newblock Online convex optimization for cumulative constraints.
\newblock In {\em Advances in Neural Information Processing Systems}, pages
  6137--6146, 2018.

\bibitem{hazan2017efficient}
Elad Hazan, Karan Singh, and Cyril Zhang.
\newblock Efficient regret minimization in non-convex games.
\newblock In {\em International Conference on Machine Learning}, pages
  1433--1441, 2017.

\bibitem{gao2018online}
Xiand Gao, Xiaobo Li, and Shuzhong Zhang.
\newblock Online learning with non-convex losses and non-stationary regret.
\newblock In {\em International Conference on Artificial Intelligence and
  Statistics}, pages 235--243, 2018.

\bibitem{blum2004online}
Avrim Blum, Vijay Kumar, Atri Rudra, and Felix Wu.
\newblock Online learning in online auctions.
\newblock {\em Theoretical Computer Science}, 324(2-3):137--146, 2004.

\bibitem{yuan2017online}
Jianjun Yuan and Andrew Lamperski.
\newblock Online control basis selection by a regularized actor critic
  algorithm.
\newblock In {\em 2017 American Control Conference (ACC)}, pages 4448--4453.
  IEEE, 2017.

\bibitem{crammer2006online}
Koby Crammer, Ofer Dekel, Joseph Keshet, Shai Shalev-Shwartz, and Yoram Singer.
\newblock Online passive-aggressive algorithms.
\newblock {\em Journal of Machine Learning Research}, 7(Mar):551--585, 2006.

\bibitem{hazan2018spectral}
Elad Hazan, Holden Lee, Karan Singh, Cyril Zhang, and Yi~Zhang.
\newblock Spectral filtering for general linear dynamical systems.
\newblock {\em arXiv preprint arXiv:1802.03981}, 2018.

\bibitem{fazel2018global}
Maryam Fazel, Rong Ge, Sham~M Kakade, and Mehran Mesbahi.
\newblock Global convergence of policy gradient methods for linearized control
  problems.
\newblock {\em arXiv preprint arXiv:1801.05039}, 2018.

\bibitem{cesa2006prediction}
Nicolo Cesa-Bianchi and G{\'a}bor Lugosi.
\newblock {\em Prediction, learning, and games}.
\newblock Cambridge university press, 2006.

\bibitem{herbster1998tracking}
Mark Herbster and Manfred~K Warmuth.
\newblock Tracking the best expert.
\newblock {\em Machine learning}, 32(2):151--178, 1998.

\bibitem{hazan2009efficient}
Elad Hazan and Comandur Seshadhri.
\newblock Efficient learning algorithms for changing environments.
\newblock In {\em Proceedings of the 26th annual international conference on
  machine learning}, pages 393--400. ACM, 2009.

\bibitem{tsuda2005matrix}
Koji Tsuda, Gunnar R{\"a}tsch, and Manfred~K Warmuth.
\newblock Matrix exponentiated gradient updates for on-line learning and
  bregman projection.
\newblock {\em Journal of Machine Learning Research}, 6(Jun):995--1018, 2005.

\bibitem{warmuth2006online}
Manfred~K Warmuth and Dima Kuzmin.
\newblock Online variance minimization.
\newblock In {\em International Conference on Computational Learning Theory},
  pages 514--528. Springer, 2006.

\bibitem{warmuth2008randomized}
Manfred~K Warmuth and Dima Kuzmin.
\newblock Randomized online pca algorithms with regret bounds that are
  logarithmic in the dimension.
\newblock {\em Journal of Machine Learning Research}, 9(Oct):2287--2320, 2008.

\bibitem{niew2016onlinepca}
Jiazhong Nie, Wojciech Kotlowski, and Manfred~K. Warmuth.
\newblock Online pca with optimal regret.
\newblock {\em Journal of Machine Learning Research}, 17(173):1--49, 2016.

\bibitem{cesa2012new}
Nicol{\`o} Cesa-Bianchi, Pierre Gaillard, G{\'a}bor Lugosi, and Gilles Stoltz.
\newblock A new look at shifting regret.
\newblock {\em arXiv preprint arXiv:1202.3323}, 2012.

\bibitem{cesa2012mirror}
Nicolo Cesa-Bianchi, Pierre Gaillard, G{\'a}bor Lugosi, and Gilles Stoltz.
\newblock Mirror descent meets fixed share (and feels no regret).
\newblock In {\em Advances in Neural Information Processing Systems}, pages
  980--988, 2012.

\bibitem{freund1997decision}
Yoav Freund and Robert~E Schapire.
\newblock A decision-theoretic generalization of on-line learning and an
  application to boosting.
\newblock {\em Journal of computer and system sciences}, 55(1):119--139, 1997.

\bibitem{bregman1967relaxation}
Lev~M Bregman.
\newblock The relaxation method of finding the common point of convex sets and
  its application to the solution of problems in convex programming.
\newblock {\em USSR computational mathematics and mathematical physics},
  7(3):200--217, 1967.

\bibitem{censor1981iterative}
Yair Censor and Arnold Lent.
\newblock An iterative row-action method for interval convex programming.
\newblock {\em Journal of Optimization theory and Applications},
  34(3):321--353, 1981.

\bibitem{herbster2001tracking}
Mark Herbster and Manfred~K Warmuth.
\newblock Tracking the best linear predictor.
\newblock {\em Journal of Machine Learning Research}, 1(Sep):281--309, 2001.

\bibitem{markowitz1952portfolio}
Harry Markowitz.
\newblock Portfolio selection.
\newblock {\em The journal of finance}, 7(1):77--91, 1952.

\bibitem{kalai2005efficient}
Adam Kalai and Santosh Vempala.
\newblock Efficient algorithms for online decision problems.
\newblock {\em Journal of Computer and System Sciences}, 71(3):291--307, 2005.

\bibitem{arora2012stochastic}
Raman Arora, Andrew Cotter, Karen Livescu, and Nathan Srebro.
\newblock Stochastic optimization for pca and pls.
\newblock In {\em Communication, Control, and Computing (Allerton), 2012 50th
  Annual Allerton Conference on}, pages 861--868. IEEE, 2012.

\end{thebibliography}

\bibliographystyle{unsrt}

\newpage

\begin{center}
\textbf{\large Supplementary}
\end{center}

\appendix

The supplementary material contains proofs of the main results of the
paper along with supporting results.

Before presenting the proofs, we need the following lemma from previous literature:
\begin{lemma}\cite{freund1997decision}
\label{lem::freund_ineq}
Suppose $0\le L\le \tilde{L}$ and $0 < R \le \tilde{R}$. Let $\beta = g(\tilde{L}/\tilde{R})$
where $g(z) = 1/(1+\sqrt{2/z})$. Then
\begin{equation*}
\frac{-L\ln\beta + R}{1-\beta}\le L + \sqrt{2\tilde{L}\tilde{R}} + R
\end{equation*}
\end{lemma}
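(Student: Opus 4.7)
The plan is to eliminate the unfriendly quantity $\beta$ by a single substitution and then reduce the claim to a one-line calculus fact. Specifically, I would set $s = \sqrt{2\tilde R/\tilde L}$, so that $\beta = 1/(1+s)$, $1-\beta = s/(1+s)$, $-\ln\beta = \ln(1+s)$, and $\sqrt{2\tilde L\tilde R} = \tilde L s$. With this substitution, multiplying the target inequality through by $s$ and cancelling the common $sR$ from both sides, the claim becomes
\begin{equation*}
L(1+s)\ln(1+s) \;\le\; sL + \tilde L s^2 - R.
\end{equation*}

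Next, I would invoke the two hypotheses $R\le\tilde R$ and $L\le\tilde L$ to remove $R$ and $L$ from the right-hand side. The definition of $s$ gives $\tilde R = \tilde L s^2/2$, so $R \le \tilde L s^2/2$ and hence $\tilde L s^2 - R \ge \tilde L s^2/2$. It therefore suffices to prove
\begin{equation*}
L\bigl[(1+s)\ln(1+s) - s\bigr] \;\le\; \tilde L\cdot\frac{s^2}{2}.
\end{equation*}
Since the bracketed quantity is nonnegative (it vanishes at $s=0$ and has derivative $\ln(1+s)\ge 0$), the hypothesis $L\le\tilde L$ lets me replace $L$ by $\tilde L$ on the left, reducing the task to the elementary, parameter-free inequality
\begin{equation*}
(1+s)\ln(1+s) - s \;\le\; \frac{s^2}{2}, \qquad s\ge 0.
\end{equation*}

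The remaining step is routine: define $h(s) = s^2/2 - (1+s)\ln(1+s) + s$, observe $h(0)=0$, and compute $h'(s) = s - \ln(1+s)\ge 0$ by the standard bound $\ln(1+x)\le x$; hence $h(s)\ge 0$. Finally, I would dispose of the degenerate edge cases separately (if $\tilde L = 0$ then $L=0$, $\beta=0$, and both sides of the lemma collapse to $R$; the strict positivity $0<R\le\tilde R$ makes $s$ well-defined whenever $\tilde L>0$). There is no real obstacle here, as the entire argument is essentially the substitution $\beta=1/(1+s)$; the only thing to get right is the bookkeeping that makes clear why $L\le\tilde L$ and $R\le\tilde R$ must each be used once and in precisely the right place, compensating for the fact that $\beta$ is tuned to $(\tilde L,\tilde R)$ rather than to $(L,R)$.
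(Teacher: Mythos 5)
Your proof is correct. Note, however, that the paper itself offers no proof of this lemma: it is imported verbatim (as Lemma 4) from Freund and Schapire \cite{freund1997decision}, so the only meaningful comparison is with that original source. Your substitution $s=\sqrt{2\tilde R/\tilde L}$, $\beta=1/(1+s)$ is a clean repackaging of the classical argument: after clearing denominators, your key inequality $(1+s)\ln(1+s)-s\le s^2/2$ is exactly the standard bound $\ln(1/\beta)\le(1-\beta^2)/(2\beta)$ used by Freund--Schapire, since dividing it by $1+s$ gives $\ln(1+s)\le\frac{s(s+2)}{2(1+s)}=\frac{1-\beta^2}{2\beta}$. The only genuine difference is bookkeeping: the original proof bounds the two terms $\frac{-L\ln\beta}{1-\beta}$ and $\frac{R}{1-\beta}$ separately, applying $L\le\tilde L$ and $R\le\tilde R$ one to each, whereas you clear the denominator first, cancel the common $sR$, and then invoke $R\le\tilde R=\tilde L s^2/2$ and $L\le\tilde L$ (the latter legitimately, since you verify the bracket $(1+s)\ln(1+s)-s$ is nonnegative). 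Your handling of the degenerate case $\tilde L=0$ (forcing $L=0$, $\beta=0$, both sides equal to $R$) and the observation that $0<R\le\tilde R$ keeps $s$ well-defined and $\beta\in(0,1)$ are the right edge-case checks, so the argument is complete and self-contained.
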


Additionally, we need the following classic bound on traces for
postive semidefinite matrices. See, e.g. \cite{tsuda2005matrix}.
\begin{lemma}
\label{lem::matrix_pos_sym_ineq}
For any positive semi-definite matrix $A$ and any symmetric matrices $B$ and $C$,
$B\preceq C$ implies $\Tr(AB)\le\Tr(AC)$. 
\end{lemma}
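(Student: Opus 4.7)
The plan is to reduce the claim to the elementary fact that the trace of a positive semi-definite matrix is non-negative. By linearity of trace, it suffices to show $\Tr(A(C-B)) \ge 0$, since then $\Tr(AC) - \Tr(AB) = \Tr(A(C-B)) \ge 0$. The hypothesis $B \preceq C$ is precisely the statement that $C - B \succeq 0$, so the problem becomes: the product (under trace) of two PSD matrices $A$ and $C-B$ has non-negative trace.

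To handle this cleanly, I would invoke the existence of a PSD square root $A^{1/2}$ of $A$ (which exists because $A \succeq 0$ is symmetric). Using the cyclic property of the trace, write
\begin{equation*}
\Tr(A(C-B)) = \Tr\bigl(A^{1/2} A^{1/2} (C-B)\bigr) = \Tr\bigl(A^{1/2} (C-B) A^{1/2}\bigr).
\end{equation*}
The matrix $M := A^{1/2}(C-B)A^{1/2}$ is symmetric, and for any vector $x$ one has $x^T M x = (A^{1/2}x)^T (C-B)(A^{1/2}x) \ge 0$ because $C - B \succeq 0$. Hence $M \succeq 0$, and its trace (the sum of its non-negative eigenvalues, or equivalently the sum of its non-negative diagonal entries $e_i^T M e_i$) is non-negative. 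Combining gives $\Tr(AC) \ge \Tr(AB)$ as required.

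There is no real obstacle here; the only subtlety worth flagging is that the argument uses the existence of a symmetric PSD square root of $A$, which is standard for symmetric PSD matrices via the spectral decomposition $A = U \diag(\lambda) U^T$ with $\lambda_i \ge 0$, setting $A^{1/2} = U \diag(\sqrt{\lambda}) U^T$. An equally short alternative would be to diagonalize $C - B = V \diag(\mu) V^T$ with $\mu_i \ge 0$, and note $\Tr(A(C-B)) = \sum_i \mu_i \, (V^T A V)_{ii} \ge 0$, since the diagonal entries of $V^T A V$ are non-negative when $A \succeq 0$. I would present the square-root argument as the cleaner version.
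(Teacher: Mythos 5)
Your proof is correct: reducing to $\Tr(A(C-B))\ge 0$, writing this as $\Tr\bigl(A^{1/2}(C-B)A^{1/2}\bigr)$ by cyclicity of the trace, and noting that the conjugated matrix is positive semi-definite is the standard argument, and your alternative via diagonalizing $C-B$ is equally valid. Note that the paper itself gives no proof of this lemma --- it is stated as a classic fact with a citation to \cite{tsuda2005matrix} --- so there is nothing in the paper to compare against; your argument is a complete, self-contained justification of the result the paper takes for granted.
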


\section{Proof of Theorem \ref{thm::adaptive_subset_expert}}

\begin{proof}

Fix $1\le r\le s \le T$. We set $\mathbf{q_t} = \mathbf{q}\in
\mathcal{B}_{n-k}^n$ for $t=r,\dots,s$ and $0$ elsewhere. Thus, we
have that  $\|\mathbf{q_t}\|_1$ is either $0$ or $1$. 

According to Lemma \ref{lem::adaptive_expert_step_ineq}, for both cases of $\mathbf{q_t}$, we have
\begin{equation}
\label{eq::initial_ineq_entropy}
\left\|\mathbf{q_t}\right\|_1 \mathbf{w_t}^T\mathbf{\ell_t}(1-\exp(-\eta)) - \eta \mathbf{q_t}^T\mathbf{\ell_t} \le
\sum_{i=1}^n q_{t,i}\ln(\frac{v_{t+1,i}}{\hat{w}_{t,i}})
\end{equation}

The analysis for $\sum_{i=1}^n
q_{t,i}\ln(\frac{v_{t+1,i}}{\hat{w}_{t,i}})$ follows the Proof of
Proposition $2$ in \cite{cesa2012new}.  
We describe the steps for completeness, since it is helpful for
understanding the effect of the fixed-share step,
Eq.(\ref{eq::fix_share_expert}). This analysis will be crucial for the
understanding how the fixed-share step can be applied to PCA
problems. 
\scriptsize
\begin{equation}
\label{eq::analysis_entropy_expert}
\begin{array}{ll}
\sum_{i=1}^n q_{t,i}\ln(\frac{v_{t+1,i}}{\hat{w}_{t,i}})
=& \underbrace{\sum_{i=1}^n\Big(q_{t,i}\ln\frac{1}{\hat{w}_{t,i}} - q_{t-1,i}\ln\frac{1}{v_{t,i}}\Big)}_A\\
& + \underbrace{\sum_{i=1}^n\Big(q_{t-1,i}\ln\frac{1}{v_{t,i}} - q_{t,i}\ln\frac{1}{v_{t+1,i}}\Big)}_B
\end{array}
\end{equation}
\normalsize

For the expression of $A$, we have
\small
\begin{equation}
\begin{array}{ll}
A =& \sum\limits_{i:q_{t,i}\ge q_{t-1,i}}\Big((q_{t,i}-q_{t-1,i})\ln\frac{1}{\hat{w}_{t,i}} +
 q_{t-1,i}\ln\frac{v_{t,i}}{\hat{w}_{t,i}}\Big)\\
 & +
 \sum\limits_{i:q_{t,i}<q_{t-1,i}}\Big(\underbrace{(q_{t,i}-q_{t-1,i})\ln\frac{1}{v_{t,i}}}_{\le 0}+q_{t,i}\ln\frac{v_{t,i}}{\hat{w}_{t,i}}\Big)
\end{array}
\end{equation}
\normalsize

Based on the update in Eq.(\ref{eq::our_expert_update}), we have $1/\hat{w}_{t,i}\le n/\alpha$
and $v_{t,i}/\hat{w}_{t,i}\le 1/(1-\alpha)$. Plugging the bounds into the above equation, we have
\small
\begin{equation}
\begin{array}{ll}
A \le & \underbrace{\sum\limits_{i:q_{t,i}\ge q_{t-1,i}} (q_{t,i}-q_{t-1,i})}_{= D_{TV}(\mathbf{q_t},\mathbf{q_{t-1}})}\ln\frac{n}{\alpha} \\
& + \underbrace{\Big(\sum\limits_{i:q_{t,i}\ge q_{t-1,i}}q_{t-1,i}+
\sum\limits_{i:q_{t,i}<q_{t-1,i}}q_{t,i}\Big)}_{=\left\|\mathbf{q_t}\right\|_1-D_{TV}(\mathbf{q_t},\mathbf{q_{t-1}})}
\ln\frac{1}{1-\alpha} .
\end{array}
\end{equation}
\normalsize

Telescoping the expression of $B$, substituting the above inequality in Eq.(\ref{eq::analysis_entropy_expert}),
and summing over $t=2,\dots,T$, we have 
\scriptsize
\begin{multline}
\sum\limits_{t=2}^T\sum\limits_{i=1}^n q_{t,i}\ln\frac{v_{t+1,i}}{\hat{w}_{t,i}} 
\le 
m(\mathbf{q_{1:T}})\ln\frac{n}{\alpha}+\\
\Big(\sum\limits_{t=2}^T\left\|\mathbf{q_t}\right\|_1-m(\mathbf{q_{1:T}})\Big)
\ln\frac{1}{1-\alpha}
+ \sum\limits_{i=1}^n q_{1,i}\ln\frac{1}{v_{2,i}}.
\end{multline}
\normalsize

Adding the $t=1$ term to the above inequality, we have
\small
\begin{equation}
\begin{array}{ll}
\sum\limits_{t=1}^T\sum\limits_{i=1}^n q_{t,i}\ln\frac{v_{t+1,i}}{\hat{w}_{t,i}} 
\le & \left\|\mathbf{q_1}\right\|_1\ln(n)+m(\mathbf{q_{1:T}})\ln\frac{n}{\alpha}\\
& +\Big(\sum\limits_{t=1}^T\left\|\mathbf{q_t}\right\|_1-m(\mathbf{q_{1:T}})\Big)\ln\frac{1}{1-\alpha}.
\end{array}
\end{equation}
\normalsize

Now we bound the right side, using the choices for $\mathbf{q_t}$
described at the beginning of the proof.  
If $r\ge 2$, $m(\mathbf{q_{1:T}}) = 1$, and $\left\|\mathbf{q_1}\right\|_1 = 0$.
If $r = 1$, $m(\mathbf{q_{1:T}}) = 0$, and $\left\|\mathbf{q_1}\right\|_1 = 1$.
Thus, $m(\mathbf{q_{1:T}}) + \left\|\mathbf{q_1}\right\|_1 = 1$, and the right part can be upper bounded by
$\ln\frac{n}{\alpha}+T\ln\frac{1}{1-\alpha}$.

Combine the above inequality with Eq.(\ref{eq::initial_ineq_entropy}),
set $\mathbf{q_t} = \mathbf{q}\in \mathcal{B}_{\text{n-k}}^\text{n}$ for $t=r,\dots,s$ and $0$ elsewhere,
and multiply both sides by $n-k$,
we have
\begin{equation}
\begin{array}{l}
(1-\exp(-\eta))\sum\limits_{t=r}^s (n-k)\mathbf{w_t}^T\mathbf{\ell_t} - \eta \sum\limits_{t=r}^s (n-k)\mathbf{q}^T\mathbf{\ell_t} \\
\quad \quad \quad \quad \quad \le (n-k)\ln\frac{n}{\alpha}+(n-k)T\ln\frac{1}{1-\alpha}
\end{array}
\end{equation}
If we set $\alpha = 1/(1+(n-k)T)$, then the right part can be upper bounded by $(n-k)\ln(n(1+(n-k)T))+1$,
which equals to $D$ as defined in the Theorem \ref{thm::adaptive_subset_expert}.
Thus, the above inequality can be reformulated as 
\begin{equation}
\sum\limits_{t=r}^s (n-k)\mathbf{w_t}^T\mathbf{\ell_t} \le \frac{\eta \sum\limits_{t=r}^s (n-k)\mathbf{q}^T\mathbf{\ell_t} + D}{1-\exp(-\eta)}
\end{equation}

Since the above inequality holds for arbitrary $\mathbf{q}\in \mathcal{B}_{\text{n-k}}^\text{n}$, we have
\begin{equation}
\label{eq::expert_ineq_before_final}
\sum\limits_{t=r}^s (n-k)\mathbf{w_t}^T\mathbf{\ell_t} \le 
\frac{\eta \min\limits_{\mathbf{q}\in\mathcal{B}_{\text{n-k}}^\text{n}}\sum\limits_{t=r}^s (n-k)\mathbf{q}^T\mathbf{\ell_t} + D}{1-\exp(-\eta)}
\end{equation}

We will apply the inequality in Lemma \ref{lem::freund_ineq} to upper bound the right part in Eq.(\ref{eq::expert_ineq_before_final}).
With $\min\limits_{\mathbf{q}\in\mathcal{B}_{\text{n-k}}^\text{n}}\sum\limits_{t=r}^s (n-k)\mathbf{q}^T\mathbf{\ell_t} \le L$
and $\eta = \ln(1+\sqrt{2D/L})$, we have 
\begin{equation}
\sum\limits_{t=r}^s (n-k)\mathbf{w_t}^T\mathbf{\ell_t} - 
\min\limits_{\mathbf{q}\in\mathcal{B}_{\text{n-k}}^\text{n}}\sum\limits_{t=r}^s (n-k)\mathbf{q}^T\mathbf{\ell_t} \le \sqrt{2LD}+D
\end{equation}

Since the above inequality always holds for all intervals, $[r,s]$,
the result is proved by maximizing the left side over $[r,s]$.
\end{proof}

\section{Proof of Theorem \ref{thm::adaptive_pca}}

\begin{proof}

In the proof, we will use two cases of $Q_t$: $Q_t\in\mathscr{B}_{n-k}^n$, and $Q_t = 0$.

We first apply the eigendecomposition to $Q_t$ as $Q_t = \widetilde{D}\diag(\mathbf{q_t})\widetilde{D}^T$,
where $\widetilde{D} = [\mathbf{\tilde{d}_1},\dots,\mathbf{\tilde{d}_n}]$.
Since in the adaptive setting, $Q_{t-1}$ is either equal to $Q_t$ or $0$,
they share the same eigenvectors and 
$Q_{t-1}$ can be expressed as $Q_{t-1} = \widetilde{D}\diag(\mathbf{q_{t-1}})\widetilde{D}^T$.

According to Lemma \ref{lem::adaptive_pca_step_ineq}, the following inequality is true for both cases of $Q_t$:
\begin{equation}
\label{eq::step_ineq_pca}
\begin{array}{l}
\left\|\mathbf{q_t}\right\|_1\Tr(W_t\mathbf{x_t}\mathbf{x_t}^T)(1-\exp(-\eta)) -\eta\Tr(Q_t\mathbf{x_t}\mathbf{x_t}^T) \\
\quad \quad \quad \quad \le -\Tr(Q_t\ln \widehat{W}_t) + \Tr(Q_t\ln V_{t+1})
\end{array}
\end{equation}

The next steps extend proof of Proposition 2 in \cite{cesa2012new} to the matrix case.

We analyze the right part of the above inequality,
which can be expressed as:
\begin{equation}
\label{eq::quantum_split_two_parts}
\begin{array}{l}
-\Tr(Q_t\ln \widehat{W}_t) + \Tr(Q_t\ln V_{t+1}) = \bar{A} + \bar{B}
\end{array}
\end{equation}
where $\bar{A} = -\Tr(Q_t\ln \widehat{W}_t) + \Tr(Q_{t-1}\ln V_t)$, 
and $\bar{B} = - \Tr(Q_{t-1}\ln V_t) + \Tr(Q_t\ln V_{t+1})$.

We will first upper bound the $\bar{A}$ term, 
and then telescope the $\bar{B}$ term.

$\bar{A}$ can be expressed as:
\begin{equation}
\begin{array}{l}
\bar{A} = \sum\limits_{i:q_{t,i}\ge q_{t-1,i}}\Big(
\underbrace{-\Tr\big((q_{t,i}\mathbf{\tilde{d}_i}\mathbf{\tilde{d}_i}^T-q_{t-1,i}\mathbf{\tilde{d}_i}\mathbf{\tilde{d}_i}^T)
\ln \widehat{W}_t\big)}_{\circled{1}}\\
\quad\quad+ \underbrace{\Tr(q_{t-1,i}\mathbf{\tilde{d}_i}\mathbf{\tilde{d}_i}^T\ln V_t) 
- \Tr(q_{t-1,i}\mathbf{\tilde{d}_i}\mathbf{\tilde{d}_i}^T\ln\widehat{W}_t)}_{\circled{2}}
\Big) \\
\quad\quad+ \sum\limits_{i:q_{t,i}<q_{t-1,i}}\Big(
\underbrace{-\Tr\big((q_{t,i}\mathbf{\tilde{d}_i}\mathbf{\tilde{d}_i}^T
-q_{t-1,i}\mathbf{\tilde{d}_i}\mathbf{\tilde{d}_i}^T)\ln V_t\big)}_{\circled{3}} \\
\quad\quad+ \underbrace{\Tr(q_{t,i}\mathbf{\tilde{d}_i}\mathbf{\tilde{d}_i}^T\ln V_t) 
- \Tr(q_{t,i}\mathbf{\tilde{d}_i}\mathbf{\tilde{d}_i}^T\ln\widehat{W}_t)}_{\circled{4}}
\Big)
\end{array}
\end{equation}

For $\circled{1}$, it can be expressed as:
\begin{equation}
\begin{array}{ll}
\circled{1} &= \Tr\big((q_{t,i}\mathbf{\tilde{d}_i}\mathbf{\tilde{d}_i}^T
- q_{t-1,i}\mathbf{\tilde{d}_i}\mathbf{\tilde{d}_i^T})\ln \widehat{W}_t^{-1}\big) \\
&\le \Tr\big((q_{t,i}\mathbf{\tilde{d}_i}\mathbf{\tilde{d}_i}^T-q_{t-1,i}\mathbf{\tilde{d}_i}\mathbf{\tilde{d}_i}^T)\ln\frac{n}{\alpha}\big) \\
&=(q_{t,i}-q_{t-1,i})\ln\frac{n}{\alpha}.
\end{array}
\end{equation}
The inequality holds because the update in Eq.(\ref{eq::fix_share_pca}) 
implies $\ln\widehat{W}_T^{-1}\preceq I\ln\frac{n}{\alpha}$ and
furthermore,
$(q_{t,i}\mathbf{\tilde{d}_i}\mathbf{\tilde{d}_i}^T-q_{t-1,i}\mathbf{\tilde{d}_i}\mathbf{\tilde{d}_i}^T)$
is positive semi-definite. 
Thus, Lemma \ref{lem::matrix_pos_sym_ineq}, gives the result.

The expression for $\circled{2}$ can be bounded as 
\begin{equation}
\begin{array}{ll}
\circled{2} &= \Tr(q_{t-1,i}\mathbf{\tilde{d}_i}\mathbf{\tilde{d}_i}^T\ln(V_t\widehat{W}_t^{-1})) \\
 & \le q_{t-1,i}\ln\frac{1}{1-\alpha}
\end{array}
\end{equation}
where the equality is due to the fact that $V_t$ and $\widehat{W}_t$
have the same eigenvectors.
The inequality follows since $\ln(V_t\widehat{W}_t^{-1})\preceq
I\ln\frac{1}{1-\alpha}$, due to the update in
Eq.(\ref{eq::fix_share_pca}),
while $q_{t-1,i}\mathbf{\tilde{d}_i}\mathbf{\tilde{d}_i}^T$ is
positive semi-definite. Thus Lemma \ref{lem::matrix_pos_sym_ineq} gives the result.

The bound $\circled{3}$ can be expressed as:
\begin{equation}
\circled{3} = \Tr\big((-q_{t,i}\mathbf{\tilde{d}_i}\mathbf{\tilde{d}_i}^T
+ q_{t-1,i}\mathbf{\tilde{d}_i}\mathbf{\tilde{d}_i}^T)\ln V_t\big) \le 0
\end{equation}
Here, the inequality follows since
$\ln V_t \preceq 0$ and 
and
$(-q_{t,i}\mathbf{\tilde{d}_i}\mathbf{\tilde{d}_i}^T+q_{t-1,i}\mathbf{\tilde{d}_i}\mathbf{\tilde{d}_i}^T)$
is positive semi-definite.
Thus, Lemma \ref{lem::matrix_pos_sym_ineq} gives the result.

For $\circled{4}$, we have $\circled{4} \le q_{t,i}\ln\frac{1}{1-\alpha}$,
which follows the same argument used to bound the term $\circled{2}$.

Thus, $\bar{A}$ can be upper bounded as follows:
\small
\begin{equation}
\begin{array}{ll}
\bar{A} \le & \underbrace{\sum\limits_{i:q_{t,i}\ge q_{t-1,i}} (q_{t,i}-q_{t-1,i})}_{= D_{TV}(\mathbf{q_t},\mathbf{q_{t-1}})}\ln\frac{n}{\alpha} \\
& + \underbrace{\Big(\sum\limits_{i:q_{t,i}\ge q_{t-1,i}}q_{t-1,i}
+ \sum\limits_{i:q_{t,i}<q_{t-1,i}} q_{t,i}\Big)}_{=\left\|\mathbf{q_t}\right\|_1-D_{TV}(\mathbf{q_t},\mathbf{q_{t-1}})}
\ln\frac{1}{1-\alpha} 
\end{array}
\end{equation}
\normalsize

Then we telescope the $\bar{B}$ term, substitute the above inequality for $\bar{A}$ into Eq.(\ref{eq::quantum_split_two_parts}), 
and sum over $t=2,\dots,T$ to give:
\scriptsize
\begin{equation}
\begin{array}{l}
\sum\limits_{t=2}^T \Big(-\Tr(Q_t\ln \widehat{W}_t) + \Tr(Q_t\ln V_{t+1})\Big) \\
\le m(\mathbf{q_{1:T}})\ln\frac{n}{\alpha}+\Big(\sum\limits_{t=2}^T\left\|\mathbf{q_t}\right\|_1-m(\mathbf{q_{1:T}})\Big)
\ln\frac{1}{1-\alpha}
- \Tr(Q_1\ln V_2)
\end{array}
\end{equation}
\normalsize

Adding the $t=1$ term to the above inequality, we have 
\scriptsize
\begin{equation}
\begin{array}{l}
\sum\limits_{t=1}^T \Big(-\Tr(Q_t\ln \widehat{W}_t) + \Tr(Q_t\ln V_{t+1})\Big) \\
\le \left\|\mathbf{q_1}\right\|_1\ln(n)+m(\mathbf{q_{1:T}})\ln\frac{n}{\alpha}
 +\Big(\sum\limits_{t=1}^T\left\|\mathbf{q_t}\right\|_1-m(\mathbf{q_{1:T}})\Big)\ln\frac{1}{1-\alpha}
\end{array}
\end{equation}
\normalsize

For the above inequality, we set $Q_t = Q\in \mathscr{B}_{n-k}^n$ for $t=r,\dots,s$ and $0$ elsewhere,
which makes $\mathbf{q_t} = \mathbf{q}\in \mathcal{B}_{n-k}^n$ for $t=r,\dots,s$ and $0$ elsewhere.
If $r\ge 2$, $m(\mathbf{q_{1:T}}) = 1$, and $\left\|\mathbf{q_1}\right\|_1 = 0$.
If $r = 1$, $m(\mathbf{q_{1:T}}) = 0$, and $\left\|\mathbf{q_1}\right\|_1 = 1$.
Thus, $m(\mathbf{q_{1:T}}) + \left\|\mathbf{q_1}\right\|_1 = 1$, and the right part can be upper bounded by
$\ln\frac{n}{\alpha}+T\ln\frac{1}{1-\alpha}$.

The rest of the steps follow exactly the same as in the proof of Theorem \ref{thm::adaptive_subset_expert}.
\end{proof}

\section{Proof of Lemma \ref{lem::mat_lem1_bianchi}}

\begin{proof}
We first deal with the term $\Tr(Q_t\ln V_{t+1})$.
According to the update in Eq.(\ref{eq::var_unit_v_t+1}), we have
\begin{equation}
\begin{array}{l}
\Tr(Q_t\ln V_{t+1}) = \Tr\bigg(Q_t\ln\Big(\frac{\exp(\ln Y_t - \eta C_t)}{\Tr(\exp(\ln Y_t - \eta C_t))}\Big)\bigg) \\
 = \Tr\big( Q_t(\ln Y_t -\eta C_t)\big) - \ln \Big(\Tr\big(\exp(\ln Y_t  - \eta C_t)\big)\Big),
\end{array}
\end{equation}
since $Q_t \in\mathscr{B}_1^n$ and $\Tr(Q_t) = 1$.

As a result, we have
$\Tr(Q_t\ln V_{t+1}) - \Tr(Q_t\ln Y_t)$ $=$ $-\eta\Tr(Q_tC_t) - \ln \Big(\Tr\big(\exp(\ln Y_t  - \eta C_t)\big)\Big)$.

Thus, to prove the inequality in Lemma \ref{lem::mat_lem1_bianchi},
it is enough to prove the following inequality
\begin{equation}
\eta\Tr(Y_tC_t) - \frac{\eta^2}{2} + \ln \Big(\Tr\big(\exp(\ln Y_t  - \eta C_t)\big)\Big) \le 0
\end{equation} 

Before we proceed, we need the following lemmas:
\begin{lemma}[Golden-Thompson inequality]
For any symmetric matrices $A$ and $B$, the following inequality holds:
\begin{equation*}
\Tr\big(\exp(A + B)\big) \le \Tr\big(\exp(A)\exp(B)\big)
\end{equation*}
\end{lemma}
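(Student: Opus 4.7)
The plan is to deduce the Golden-Thompson inequality from two classical ingredients: the Lie-Trotter product formula and the Lieb-Thirring trace-power inequality.

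First I would invoke the Lie-Trotter product formula: for any symmetric matrices $A$ and $B$,
\[
\exp(A + B) \;=\; \lim_{m \to \infty} \bigl(\exp(A/m)\exp(B/m)\bigr)^m .
\]
This is a standard identity that follows from a short Baker-Campbell-Hausdorff expansion: the commutator correction in $\exp(A/m)\exp(B/m) - \exp((A+B)/m)$ is $O(1/m^2)$, and compounding $m$ such factors leaves a total error $O(1/m)$ that vanishes in the limit. Writing $X_m := \exp(A/m)$ and $Y_m := \exp(B/m)$, both factors are symmetric and positive definite, and since the trace is continuous on finite-dimensional matrices, it suffices to establish the uniform-in-$m$ bound $\Tr\bigl((X_m Y_m)^m\bigr) \le \Tr(\exp(A)\exp(B))$.

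Second I would invoke the Lieb-Thirring trace-power inequality: for any pair of positive semi-definite matrices $X, Y$ and any positive integer $p$,
\[
\Tr\bigl((XY)^p\bigr) \;\le\; \Tr\bigl(X^p Y^p\bigr) .
\]
Specializing to $X = X_m$, $Y = Y_m$, $p = m$, and using the fact that $X_m^m = \exp(A)$ and $Y_m^m = \exp(B)$, this directly delivers the required uniform bound. Combining with the Lie-Trotter limit then finishes the proof.

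The main obstacle is the Lieb-Thirring step, which is itself a non-trivial inequality. If a self-contained in-paper proof is desired, the cleanest route is to restrict to the subsequence $m = 2^k$ in the Lie-Trotter limit and prove $\Tr((XY)^{2^k}) \le \Tr(X^{2^k} Y^{2^k})$ by induction on $k$. The base case follows from Cauchy-Schwarz for the Frobenius inner product applied to $Q = XY$, giving $\Tr(Q^2) \le \|Q\|_F \|Q^*\|_F = \Tr(QQ^*) = \Tr(XY^2 X) = \Tr(X^2 Y^2)$; positivity of $\Tr(Q^2)$, needed to drop absolute values, follows from the similarity $XY \sim X^{1/2} Y X^{1/2}$, whose right-hand side is positive semi-definite. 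The inductive step, in which the trace after Cauchy-Schwarz doubling must be rearranged under cyclicity to match the next level of the bound, is where the real content of Lieb-Thirring lives; for the paper's purposes it is simplest to cite the known inequality rather than unwind the combinatorics in full.
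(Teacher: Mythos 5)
The paper does not prove the Golden--Thompson inequality; it simply records it as a classical fact and moves on, so there is no in-paper argument to compare against. Your proposed proof via the Lie--Trotter product formula $\exp(A+B)=\lim_{m\to\infty}\bigl(\exp(A/m)\exp(B/m)\bigr)^m$ together with the Lieb--Thirring trace inequality $\Tr\bigl((XY)^p\bigr)\le\Tr\bigl(X^pY^p\bigr)$ for $X,Y\succeq 0$ and integer $p\ge 1$ is one of the standard routes and is correct: taking $X_m=\exp(A/m)$, $Y_m=\exp(B/m)$, $p=m$ gives the uniform bound $\Tr\bigl((X_mY_m)^m\bigr)\le\Tr(\exp A\,\exp B)$, and Lie--Trotter plus continuity of the trace carries this through the limit. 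Your base case $\Tr\bigl((XY)^2\bigr)\le\Tr(X^2Y^2)$ is also correctly argued; in particular the observation that $XY$ is similar to $X^{1/2}YX^{1/2}\succeq 0$, so $\Tr\bigl((XY)^2\bigr)\ge 0$ and the absolute value from Cauchy--Schwarz on the Frobenius pairing $\langle Q^*,Q\rangle$ can be dropped, is the right way to justify that step. Your caution about the inductive step for dyadic $p$ is warranted: Cauchy--Schwarz doubling yields $\Tr\bigl((XY)^{2p}\bigr)\le\Tr\bigl((XY)^p(YX)^p\bigr)$, and rearranging $(XY)^p(YX)^p$ under cyclicity into a form matching the inductive hypothesis is precisely where the substance of Lieb--Thirring sits; citing it as a black box, as you propose, is the sensible choice for a paper whose focus lies elsewhere. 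For the present paper's purposes the lemma can remain a citation, as the authors do; if a self-contained proof were wanted, your Lie--Trotter plus Lieb--Thirring route is as clean as any of the classical alternatives (e.g.\ iterating $\Tr\bigl((AB)^{2m}\bigr)\le\Tr\bigl((A^2B^2)^m\bigr)$, which requires a comparable log-majorization fact).
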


\begin{lemma}[Lemma 2.1 in \cite{tsuda2005matrix}]
\label{lem::lem21_tsuda}
For any symmetric matrix $A$ such that $0\preceq A\preceq I$ and any $\rho_1,\rho_2 \in\mathbb{R}$,
the following holds:
\begin{equation*}
\exp\big(A\rho_1 + (I-A)\rho_2\big) \preceq A\exp(\rho_1) + (I-A)\exp(\rho_2)
\end{equation*}
\end{lemma}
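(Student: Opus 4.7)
The plan is to reduce this matrix inequality to a scalar one by exploiting the spectral decomposition of $A$ and then invoking the convexity of the scalar exponential.

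First I would diagonalize $A$. Since $A$ is symmetric with $0 \preceq A \preceq I$, write $A = \sum_i \lambda_i v_i v_i^T$ with an orthonormal eigenbasis $\{v_i\}$ and eigenvalues $\lambda_i \in [0,1]$. Then $I - A = \sum_i (1-\lambda_i) v_i v_i^T$, so $A$ and $I-A$ commute and share the same eigenvectors. Consequently the linear combination $A\rho_1 + (I-A)\rho_2$ is diagonal in this basis with eigenvalues $\lambda_i \rho_1 + (1-\lambda_i)\rho_2$, and
\begin{equation*}
\exp\bigl(A\rho_1 + (I-A)\rho_2\bigr) = \sum_i \exp\bigl(\lambda_i \rho_1 + (1-\lambda_i)\rho_2\bigr)\, v_i v_i^T.
\end{equation*}

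Next I would expand the right-hand side of the target inequality in the same eigenbasis,
\begin{equation*}
A\exp(\rho_1) + (I-A)\exp(\rho_2) = \sum_i \bigl(\lambda_i \exp(\rho_1) + (1-\lambda_i)\exp(\rho_2)\bigr)\, v_i v_i^T.
\end{equation*}
Since both sides are simultaneously diagonal, the operator inequality $\preceq$ is equivalent to the scalar inequalities
\begin{equation*}
\exp\bigl(\lambda_i \rho_1 + (1-\lambda_i)\rho_2\bigr) \le \lambda_i \exp(\rho_1) + (1-\lambda_i)\exp(\rho_2)
\end{equation*}
holding for every $i$. Each of these is Jensen's inequality applied to the convex function $\exp$ with the convex weights $\lambda_i, 1-\lambda_i \in [0,1]$, and so the lemma follows.

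There is essentially no obstacle in this argument. The only point that requires any care is the observation that $A$ and $I-A$ commute, which ensures that their weighted sum admits a clean spectral decomposition; once that is noted, the problem becomes scalar and collapses to the elementary convexity of $\exp$. An alternative (more abstract) route would be to appeal directly to the operator version of Jensen's inequality for matrix-convex functions applied to the exponential, but the eigenvalue-wise comparison above is by far the simplest.
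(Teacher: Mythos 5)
Your argument is correct. The paper does not actually prove this lemma; it is stated and cited directly from Tsuda et al.\ (Lemma 2.1 of \cite{tsuda2005matrix}) without proof. Your approach---simultaneous diagonalization of $A$, $I-A$, and $A\rho_1+(I-A)\rho_2$ in the eigenbasis of $A$, followed by an entrywise application of Jensen's inequality for the scalar exponential---is precisely the standard argument given in that reference, so there is nothing to reconcile. The one sentence worth tightening is the equivalence claim: the operator inequality reduces to the coordinate-wise scalar inequalities because the difference of the two sides is $\sum_i c_i\, v_i v_i^T$ with $c_i = \lambda_i e^{\rho_1} + (1-\lambda_i)e^{\rho_2} - \exp(\lambda_i\rho_1+(1-\lambda_i)\rho_2)$, and a matrix of this form is positive semidefinite if and only if every $c_i\ge 0$; this is exactly what convexity of $\exp$ delivers for $\lambda_i\in[0,1]$.
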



Then we apply the Golden-Thompson inequality to the term $\Tr\big(\exp(\ln Y_t  - \eta C_t)\big)$,
which gives us the inequality below:
\begin{equation}
\Tr\big(\exp(\ln Y_t  - \eta C_t)\big) \le \Tr(Y_t\exp(-\eta C_t)).
\end{equation}
For the term $\exp(-\eta C_t)$, 
by applying the Lemma \ref{lem::lem21_tsuda} with $\rho_1 = -\eta$ and $\rho_2 = 0$,
we will have the following inequality:
\begin{equation}
 \exp(-\eta C_t) \preceq I - C_t(1-\exp(-\eta)).
\end{equation}
Thus, we will have
\begin{equation}
\Tr(Y_t\exp(-\eta C_t)) \le 1 - \Tr(Y_tC_t)(1-\exp(-\eta)),
\end{equation}
and 
\begin{equation}
\Tr\big(\exp(\ln Y_t  - \eta C_t)\big) \le 1 - \Tr(Y_tC_t)(1-\exp(-\eta)),
\end{equation}
since $Y_t \in\mathscr{B}_1^n$ and $\Tr(Y_t) = 1$.

Thus, it is enough to prove the following inequality
\begin{equation}
\eta\Tr(Y_tC_t) - \frac{\eta^2}{2} + \ln \Big(1 - \Tr(Y_tC_t)(1-\exp(-\eta))\Big) \le 0
\end{equation} 

Since $\ln(1-x) \le -x$, we have
\small
\begin{equation}
\ln \Big(1 - \Tr(Y_tC_t)(1-\exp(-\eta))\Big) \le  -\Tr(Y_tC_t)(1-\exp(-\eta)).
\end{equation}
\normalsize
Thus, it suffices to prove the following inequality:
\begin{equation} 
\big(\eta-1+\exp(-\eta)\big)\Tr(Y_tC_t) - \frac{\eta^2}{2} \le 0
\end{equation}

Note that by using convexity
of $\exp(-\eta)$, $\eta-1+\exp(-\eta) \ge 0$.

By applying Lemma \ref{lem::matrix_pos_sym_ineq} with $A = Y_t$, $B = C_t$, and $C = I$,
we have $\Tr(Y_tC_t) \le \Tr(Y_t) = 1$.
Thus, when $\eta \ge 0$, it is enough to prove the following inequality 
\begin{equation}
\eta-1+\exp(-\eta) - \frac{\eta^2}{2} \le 0.
\end{equation}
This inequality follows from convexity of
$\frac{\eta^2}{2}-\exp(-\eta)$ over $\eta\ge 0$. 
\end{proof}

\section{Proof of Theorem \ref{thm::adaptive_var_simplex}}

\begin{proof}

First, since $0\preceq C_t\preceq I$, we have $\max_{i,j}|C_t(i,j)|$ $\le 1$.

Before we proceed, we need the following lemma from \cite{warmuth2006online}
\begin{lemma}[Lemma 1 in \cite{warmuth2006online}]
\label{lem::var_simplex_step}
Let $\max_{i,j}|C_t(i,j)|\le\frac{r}{2}$, then
for any $\mathbf{u_t}\in\mathcal{B}_1^n$, any constants $a$ and $b$ such that $0\le a \le \frac{b}{1+rb}$,
and $\eta = \frac{2b}{1+rb}$, we have
\begin{equation*}
a\mathbf{y_t}^TC_t\mathbf{y_t} - b\mathbf{u_t}^TC_t\mathbf{u_t} \le d(\mathbf{u_t},\mathbf{y_t}) - d(\mathbf{u_t},\mathbf{v_{t+1}})
\end{equation*}
\end{lemma}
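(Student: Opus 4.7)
The plan is to lift the per-step inequality in Lemma \ref{lem::var_simplex_step} to an interval regret bound, mimicking the fixed-share telescoping used in the proofs of Theorems \ref{thm::adaptive_subset_expert} and \ref{thm::adaptive_pca}. Since $0\preceq C_t\preceq I$ implies $\max_{i,j}|C_t(i,j)|\le 1$, Lemma \ref{lem::var_simplex_step} applies with its parameter equal to $2$. I would then take $a=b/(1+2b)$, which both satisfies the lemma's hypothesis $0\le a\le b/(1+2b)$ and makes $\eta=2b/(1+2b)=2a$, matching Algorithm \ref{alg::adaptive_var_simplex}. Fixing an interval $[r,s]\subset[1,T]$ and any $\mathbf{u}\in\mathcal{B}_1^n$, I would set $\mathbf{u_t}=\mathbf{u}$ for $t=r,\dots,s$ and sum the per-step inequality to obtain
\begin{equation*}
a\sum_{t=r}^s \mathbf{y_t}^T C_t\mathbf{y_t} - b\sum_{t=r}^s \mathbf{u}^T C_t\mathbf{u} \le \sum_{t=r}^s\bigl[d(\mathbf{u},\mathbf{y_t})-d(\mathbf{u},\mathbf{v_{t+1}})\bigr].
\end{equation*}

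The right-hand side does not telescope directly, because consecutive summands involve $\mathbf{y_t}$ and $\mathbf{v_{t+1}}$, which differ only through the fixed-share mixing $y_{t+1,i}=\alpha/n+(1-\alpha)v_{t+1,i}$. I would rearrange the sum as $d(\mathbf{u},\mathbf{y_r})-d(\mathbf{u},\mathbf{v_{s+1}})+\sum_{t=r+1}^{s}[d(\mathbf{u},\mathbf{y_t})-d(\mathbf{u},\mathbf{v_t})]$ and exploit two consequences of the mixing step: the lower bound $y_{t,i}\ge(1-\alpha)v_{t,i}$ gives $d(\mathbf{u},\mathbf{y_t})-d(\mathbf{u},\mathbf{v_t}) = \sum_i u_i\ln\frac{v_{t,i}}{y_{t,i}} \le \ln\frac{1}{1-\alpha}$, while $y_{t,i}\ge\alpha/n$ (with the uniform initialization handling the $r=1$ case) gives $d(\mathbf{u},\mathbf{y_r})\le\ln\frac{n}{\alpha}$. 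Dropping the nonpositive $-d(\mathbf{u},\mathbf{v_{s+1}})$ bounds the right-hand side by $\ln\frac{n}{\alpha}+T\ln\frac{1}{1-\alpha}$, which with $\alpha=1/(T+1)$ and the elementary estimate $T\ln(1+1/T)\le 1$ collapses to $D:=\ln\bigl((1+T)n\bigr)+1$.

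Combining the two displays, dividing by $a$, and using the identity $b/a=1+2b$ yields $\sum_{t=r}^s\mathbf{y_t}^T C_t\mathbf{y_t}-\sum_{t=r}^s\mathbf{u}^T C_t\mathbf{u}\le 2b\sum_{t=r}^s\mathbf{u}^T C_t\mathbf{u}+(1+2b)D/b$. Specializing $\mathbf{u}$ to the interval minimizer so that $\sum_{t=r}^s\mathbf{u}^T C_t\mathbf{u}\le L$, the right-hand side becomes $2bL+D/b+2D$, which is minimized at $b=\sqrt{D/(2L)}=c/2$, matching the theorem's parameter choice, and evaluates to $2\sqrt{2LD}+2D$; taking $\max_{[r,s]\subset[1,T]}$ on the left completes the argument. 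The main obstacle I anticipate is the bookkeeping around the fixed-share perturbation, namely keeping each per-step slack $\ln\frac{1}{1-\alpha}$ small enough that its cumulative contribution over $T$ rounds remains $O(1)$, and absorbing the interval-dependent initial divergence $d(\mathbf{u},\mathbf{y_r})$ uniformly in $r$ via the lower bound $y_{r,i}\ge\alpha/n$ produced by the mixing.
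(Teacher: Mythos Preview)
Your proposal does not prove the stated lemma: it \emph{assumes} Lemma~\ref{lem::var_simplex_step} and uses it as a black box to derive the adaptive regret bound of Theorem~\ref{thm::adaptive_var_simplex}. The lemma itself is an external result quoted from \cite{warmuth2006online}; the present paper offers no proof of it, so there is nothing here to compare a proof of the lemma against. If the task was really to prove the lemma, you would need to open up the exponentiated-gradient update in Eq.~(\ref{eq::var_simplex_v_t+1}) and carry out the Bregman-divergence computation from \cite{warmuth2006online}, none of which appears in your write-up.

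If instead your target was Theorem~\ref{thm::adaptive_var_simplex}, then your argument is correct and coincides with the paper's. The only cosmetic difference is that the paper sets $\mathbf{u_t}=0$ for $t\notin[r,s]$, sums over all $t=1,\dots,T$, and reuses verbatim the $A$/$B$ decomposition from the proof of Theorem~\ref{thm::adaptive_subset_expert}; you sum only over $t\in[r,s]$ and telescope directly, absorbing the boundary term $d(\mathbf{u},\mathbf{y_r})$ via $y_{r,i}\ge\alpha/n$. Both routes yield $\ln(n/\alpha)+T\ln\frac{1}{1-\alpha}$ for the entropy contribution and the same optimization over $b=c/2$.
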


Now we apply Lemma \ref{lem::var_simplex_step} 
under the conditions  $r =2$, $a = \frac{b}{2b+1}$, $\eta = 2a$, 
and $b = \frac{c}{2}$.

Recall that $d(\mathbf{u_t},\mathbf{y_t}) - d(\mathbf{u_t},\mathbf{v_{t+1}})$ $=$ $\sum_{i}u_{t,i}\ln\Big(\frac{v_{t+1,i}}{y_{t,i}}\Big)$.
Combining this with the inequality in Lemma \ref{lem::var_simplex_step} 
and the fact that $\left\|\mathbf{u_t}\right\|_1 = 1$,
we have 
\begin{equation}
a\left\|\mathbf{u_t}\right\|_1\mathbf{y_t}^TC_t\mathbf{y_t} - b\mathbf{u_t}^TC_t\mathbf{u_t} \le \sum_{i}u_{t,i}\ln\Big(\frac{v_{t+1,i}}{y_{t,i}}\Big)
\end{equation}

Note that the above inequality is also true when $\mathbf{u_t} = 0$.

Note that the right side of the above inequality 
 is the same as the right part of the Eq.(\ref{eq::initial_ineq_entropy}) 
in the proof of Theorem \ref{thm::adaptive_subset_expert}.

As a result, we will use the same steps as in the proof of Theorem \ref{thm::adaptive_subset_expert}.
Then we will set $\mathbf{u_t} = \mathbf{u} = \argmin_{\mathbf{q}\in\mathcal{B}_1^n}\sum\limits_{t=r}^s \mathbf{q}^TC_t\mathbf{q}$
for $t = r,\dots,s$, and $0$ elsewhere. Summing from $t=1$ up to $T$,
gives the following inequality:
\begin{equation}
a\big[ \sum_{t=r}^s \mathbf{y_t}^TC_t\mathbf{y_t} \big] - b\big[\min_{\mathbf{u}\in\mathcal{B}_1^n}\sum_{t=r}^s\mathbf{u}^TC_t\mathbf{u}\big] 
\le \ln\frac{n}{\alpha} + T\ln\frac{1}{1-\alpha}
\end{equation}

Since $\alpha = 1/(T+1)$, $T\ln\frac{1}{1-\alpha} \le 1$.
Then the above inequality becomes
\begin{equation}
a\big[ \sum\limits_{t=r}^s \mathbf{y_t}^TC_t\mathbf{y_t} \big] 
- b\big[\min_{\mathbf{u}\in\mathcal{B}_1^n}\sum\limits_{t=r}^s\mathbf{u}^TC_t\mathbf{u}\big] 
\le \ln\big((1+T)n\big) + 1
\end{equation}

Plugging in the expressions of $a = c/(2c+2)$, $b = c/2$, and $c = \frac{\sqrt{2\ln\big((1+T)n\big)+2}}{\sqrt{L}}$ we will have
\begin{equation}
\begin{array}{l}
\sum\limits_{t=r}^s \mathbf{y_t}^TC_t\mathbf{y_t} - \min_{\mathbf{u}\in\mathcal{B}_1^n}\sum\limits_{t=r}^s\mathbf{u}^TC_t\mathbf{u} \\
\le c\Big[\min_{\mathbf{u}\in\mathcal{B}_1^n}\mathbf{u}^TC_t\mathbf{u}\Big] + 2\frac{c+1}{c}\big(\ln\big((1+T)n\big) + 1\big) \\
\le cL + 2\frac{c+1}{c}\big(\ln\big((1+T)n\big) + 1\big) \\
= 2\sqrt{2L\Big(\ln\big((1+T)n\big)+1\Big)} + 2\ln\big((1+T)n\big)
\end{array}
\end{equation}

Since the inequality holds for any $1\le r\le s \le T$, the proof is
concluded by maximizing over $[r,s]$ on the left.
\end{proof}


\end{document}